\documentclass[letterpaper, 10 pt, conference]{ieeeconf}

\IEEEoverridecommandlockouts
\usepackage{graphics}
\usepackage{amsmath}
\usepackage{amssymb}
\usepackage{epsfig}
\usepackage{mathptmx}
\usepackage{times}
\usepackage{algorithm}
\usepackage{algorithmic}

\makeatletter
\newcommand{\Statex}{\item[]}
\makeatother

\usepackage{mathtools}

\usepackage{amsthm}
\usepackage{hyperref}
\usepackage{eucal}
\usepackage{balance}

\newtheorem{theorem}{Theorem}
\newtheorem{corollary}{Corollary}
\newtheorem{definition}{Definition}
\newtheorem{lemma}{Lemma}
\newtheorem{proposition}{Proposition}
\newtheorem{assumption}{Assumption}

\title{\LARGE \bf
Model-Based Learning of Near-Optimal Finite-Window Policies in POMDPs
}

\author{Philip Jordan and Maryam Kamgarpour
\thanks{Philip Jordan and Maryam Kamgarpour are with SYCAMORE group,
Institute of Mechanical Engineering, EPFL Switzerland.
        $\left\{ \text{\texttt{philip.jordan, maryam.kamgarpour}} \right\}$\texttt{@epfl.ch}}}

\begin{document}

\maketitle
\thispagestyle{empty}
\pagestyle{empty}

\begin{abstract}
We study model-based learning of finite-window policies in tabular partially observable Markov decision processes (POMDPs). A common approach to learning under partial observability is to approximate unbounded history dependencies using finite action-observation windows. This induces a finite-state Markov decision process (MDP) over histories, referred to as the superstate MDP. Once a model of this superstate MDP is available, standard MDP algorithms can be used to compute optimal policies, motivating the need for sample-efficient model estimation. Estimating the superstate MDP model is challenging because trajectories are generated by interaction with the original POMDP, creating a mismatch between the sampling process and target model. We propose a model estimation procedure for tabular POMDPs and analyze its sample complexity. Our analysis exploits a connection between filter stability and concentration inequalities for weakly dependent random variables. As a result, we obtain tight sample complexity guarantees for estimating the superstate MDP model from a single trajectory. Combined with value iteration, this yields approximately optimal finite-window policies for the~POMDP.
\end{abstract}

\section{Introduction}

Partially observable Markov decision processes (POMDPs) are a fundamental framework for modeling sequential decision-making under uncertainty. While Markov decision processes (MDPs) have become the standard model in reinforcement learning, they rely on the agent having full knowledge of the environment's state -- an idealization that is often unrealistic in practice. POMDPs relax this requirement by allowing the agent to receive only partial or noisy observations of an underlying state, making them applicable to a broader class of problems, such as robotic control~\cite{lauri2022partially} and autonomous driving~\cite{bai2015intention} under sensor noise, or strategic interaction in games of imperfect information~\cite{yao2020solving}. Developing principled methods for learning in POMDPs is therefore a problem of fundamental importance.

However, it has long been recognized that the greater generality of POMDPs comes with inherent hardness barriers, arising from the fact that optimal policies may depend on the full history of observations. Apart from very specific settings such as linear quadratic Gaussian problems, even the planning task of identifying a near-optimal policy under full knowledge of the model is computationally intractable~\cite{papadimitriou1987complexity}. Similar obstacles arise in the learning problem: even ignoring computational constraints, finding a near-optimal policy in a POMDP requires an exponential number of samples~\cite{krishnamurthy2016pac}.
As a result, a large body of work has explored approximate solution methods that trade optimality for computational tractability or sample-efficiency. In this work, we focus on approximate approaches that are implementable in a computationally efficient manner. One such approach is the use of finite-window techniques, where policies depend only on a fixed-length history window of recent actions and observations~\cite{mccallum1996reinforcement,loch1998using,meuleau1999solving}.

Recently, several works have sought to provide a theoretical explanation for the empirical success of finite-window policies~\cite{kara2023convergence,cayci2024finite,golowich2022learning,anjarlekar2026scalable}. A standard approach in these works is to approximate the underlying POMDP by an MDP, referred to as a \emph{superstate MDP}, whose states correspond to windows of recent history. This reduction from POMDP to superstate MDP provides a principled framework for designing provable algorithms for POMDPs. Under appropriate structural assumptions, one can show that optimal finite-window policies for the superstate MDP are also near-optimal for the respective POMDP \cite{kara2022near}.
More broadly, the superstate MDP is an instance of an MDP with \emph{approximate information state}, a framework which in~\cite{subramanian2022approximate} has been applied to planning and learning in partially observed systems.

A central challenge in theoretical analysis of the superstate MDP approach is that, although we can define Markovian transition dynamics over superstates, trajectories can only be sampled from the underlying POMDP, whose dynamics depend on the full history of actions and observations. In this work, we consider a model-based approach for tabular POMDPs, aiming to estimate the reward function and transition kernel of the superstate MDP from trajectories generated by the POMDP. The key difficulty is to learn an optimal policy despite this mismatch between the Markovian structure of the target model and the history-dependent nature of the data-generating process. To the best of our knowledge, how to accomplish this in a sample-efficient manner has not been addressed in prior work.

Prior work~\cite{kara2023convergence} analyzes Q-learning in the superstate MDP and derives approximation guarantees based on filter stability; however, the resulting convergence guarantees are only asymptotic.
A quasi-polynomial finite-time complexity guarantee for a finite-window approach is established in~\cite{golowich2022learning}, but no explicit sample complexity bound for a fixed window size is provided. More recently,~\cite{cayci2024finite,anjarlekar2026scalable} study the finite-time convergence of temporal-difference-learning-based policy evaluation in the superstate MDP under linear function approximation. Applied to the tabular setting, the resulting guarantees imply sample complexity bounds for learning an $\epsilon$-optimal policy on the order of~$\mathcal{O}(\epsilon^{-4})$, leaving a substantial gap to the tight $\mathcal{O}(\epsilon^{-2})$ rates known for fully observable MDPs~\cite{azar2012sample}. Even in the basic tabular case, it is thus not well-understood whether this sample complexity gap is avoidable or the fundamental price of partial observability.

\noindent\textbf{Contributions.} Motivated by the above, we consider tabular POMDPs, and our contributions are summarized as follows.
\begin{enumerate}
\item We propose a method for estimating the model of a superstate MDP based on a trajectory sampled from the underlying POMDP. Transition probabilities are estimated from the empirical frequencies of the observed finite action-observation windows, and rewards from the observed observation-action pairs.
\item Under regularity assumptions on the transition and observation kernels, see Assumptions~\ref{ass:uni-min} and~\ref{ass:uni-obs}, we provide tight sample complexity guarantees for the proposed superstate model estimation, see Proposition~\ref{prop:est-prob}.
\item Through value iteration on the estimated model, we derive an efficient algorithm for learning policies with $m$-step history dependence. In particular, for a trajectory of length $\mathcal{O}(\epsilon^{-2})$, the obtained policy is $\epsilon$-optimal for the POMDP up to an additional error due to finite-window approximation, see Theorem~\ref{thm:main}.
\end{enumerate}

\section{Preliminaries}

\noindent\textbf{Notation.} Let $\mathbb{N} \coloneqq \left\{ 1,2,\dots \right\}$. For $n \in \mathbb{N}$, let $[n] \coloneqq \left\{ 1,2,\dots,n \right\}$. For $a,b \in \mathbb{N}$ with $a \leq b$, let $[a,b] \coloneqq \left\{ a,a+1,\dots,b-1,b \right\}$.

\subsection{Problem Formulation}

\noindent\textbf{POMDP.} We consider tabular infinite-horizon discounted POMDPs defined as a tuple $\mathcal{P}=(\mathcal{S},\mathcal{A},\mathcal{O},r,\mathbb{P},\mathbb{O},\mu,\gamma)$ with finite state space $\mathcal{S}$ of size $S \coloneqq \vert \mathcal{S} \vert$, finite action space $\mathcal{A}$ of size $A\coloneqq |\mathcal{A}|$, and finite observation space $\mathcal{O}$ of size $O \coloneqq |\mathcal{O}|$. The reward function is defined\footnote{In some related works, $r$ is defined as a function of state and action. We use the observation-action definition, since state-dependent rewards may reveal additional information about the hidden state. Our results also generalize to state-action rewards.} as $r:\mathcal{O} \times \mathcal{A} \to [-1,1]$, and transition probabilities are given by $\mathbb{P}(\cdot \mid s,a) \in \Delta(\mathcal{S})$ for each $s \in \mathcal{S}$ and $a \in \mathcal{A}$, where $\Delta(\mathcal{S})$ denotes the probability simplex over~$\mathcal{S}$. For each $s \in \mathcal{S}$, observation probabilities are given by $\mathbb{O}(\cdot \mid s) \in \Delta(\mathcal{O})$.

At time $t \in \mathbb{N}$, the agent receives observation $o_t \sim \mathbb{O}(\cdot \mid s_t)$, chooses an action $a_t \in \mathcal{A}$, obtains reward $r(o_t,a_t)$, and the unobserved state transitions to $s_{t+1} \sim \mathbb{P}(\cdot \mid s_t,a_t)$. The initial state $s_1$ is drawn from the distribution $\mu \in \Delta(\mathcal{S})$. We consider infinite interactions with rewards discounted by~$\gamma \in (0,1)$.

\textbf{Histories and policies.} For $n \in \mathbb{N}$, we define $\mathcal{H}^n \coloneqq (\mathcal{O} \times \mathcal{A})^{n-1} \times \mathcal{O}$ as the set of $n$-step action-observation histories. Let $\mathcal{H}^{\leq m} \coloneqq \bigcup_{n \in [m]} \mathcal{H}^n$ and $\mathcal{H}^{\infty} \coloneqq \bigcup_{m \in \mathbb{N}} \mathcal{H}^{\leq m}$.
For $t \in \mathbb{N}$, let~$h \in \mathcal{H}^t$ be a history written as $h=(o_1,a_1,o_2,\dots,a_{t-1},o_t)$. For integers $k \leq l$, we define the sub-history $h_{k:l}=(o_k,a_k,\dots,a_{l-1},o_l)$, restricted to the indices contained in~$[1,t]$. In particular, $h_{1:t}=h$. We denote by $|h| = t$ the length of $h$, that is, the number of observations it contains.
We distinguish between the class of history-dependent deterministic policies~$\Pi^{\infty} \coloneqq \left\{ \pi=(\pi_t)_{t \in \mathbb{N}} \mid \pi_t:\mathcal{H}^{t} \to \mathcal{A} \text{ for all } t \in \mathbb{N} \right\}$, and, for $m \in \mathbb{N}$, the class of $m$-step history-dependent stationary policies~$\Pi^m \coloneqq \left\{ \pi:\mathcal{H}^{\leq m} \to \mathcal{A} \right\}$. At time $t \in \mathbb{N}$, given full history $h \in \mathcal{H}^{t}$, a policy $\pi \in \Pi^{\infty}$ chooses $a = \pi_t(h)$, whereas a policy~$\pi^m \in \Pi^m$ chooses $a = \pi^m(h_{t-m+1 \,:\, t})$.
For any $\pi \in \Pi^{\infty} \cup \Pi^m$, we can define the discounted value function
\begin{align*}
V(\pi) \coloneqq \mathbb{E}_{\pi,s_1 \sim \mu} \left[ \sum_{t=1}^{\infty} \gamma^{t-1} r(o_t,a_t) \right].
\end{align*}
The expectation is taken over the distribution of trajectories induced by the POMDP $\mathcal{P}$ under ($m$-step or fully history-dependent) policy $\pi$, including the randomness from state transitions and observation emissions.

Given some~$\epsilon > 0$ and $m \in \mathbb{N}$, our objective is to learn a near-optimal $m$-step window policy $\pi^m \in \Pi^m$ such that 
\begin{align*}
V^{\star}-V(\pi^m) \leq \epsilon \quad \text{where} \quad V^{\star} \coloneqq \max_{\pi \in \Pi^{\infty}}V(\pi).
\end{align*}

\subsection{MDP Representations of a POMDP}
\label{sec:mdp-repr}

Our approach builds on the well-known equivalence between POMDPs and fully observable MDPs with history-dependent states~\cite{puterman1994markov,kaelbling1998planning}. In particular, any POMDP $\mathcal{P}$ can be cast as an MDP with state space given by the set of full histories, resulting in an equivalent infinite-state MDP $\mathcal{M}^\infty$.

Since solving $\mathcal{M}^\infty$ is intractable due to its infinite state space, a common approach is to approximate it with a finite-state MDP~\cite{kara2023convergence,cayci2024finite,golowich2022learning}. In this work, we consider a truncation-based approximation, namely the $m$-step superstate MDP $\mathcal{M}^m$, whose states consist of history windows of length at most $m$.

\textbf{History-state MDP.} This MDP $\mathcal{M}^{\infty}$ has action space $\mathcal{A}$ and (infinite) state space $\mathcal{H}^{\infty}$. Rewards $r^{\infty}$ and transition probabilities $\mathbb{P}^{\infty}$ are defined as follows. For each $a \in \mathcal{A}$, and $h=(o_1,a_1,\dots,a_{t-1},o_t) \in \mathcal{H}^{\infty}$, we let $r^{\infty}(h,a)\coloneqq r(o_t,a)$. Further, for each $a \in \mathcal{A}$ and $h \in \mathcal{H}^{\infty}$, $\mathbb{P}^{\infty}(h \circ (a,o) \mid h,a)$ is defined as the probability of observing $o \in \mathcal{O}$ at the next time step in the underlying POMDP $\mathcal{P}$, conditioned on the history $h$ and the action~$a$.
The resulting decision process is Markovian by construction.
Moreover, the history-state MDP $\mathcal{M}^{\infty}$ is equivalent to the underlying POMDP $\mathcal{P}$ in the sense that a policy $\pi \in \Pi^{\infty}$ is optimal for $\mathcal{M}^{\infty}$ if and only if it is optimal for $\mathcal{P}$.
However, the infinite state space $\mathcal{H}^{\infty}$ makes this representation impractical, motivating the finite-state approximation $\mathcal{M}^m$.

\textbf{Superstate MDP.} 
For some $m \in \mathbb{N}$, the $m$-step superstate MDP $\mathcal{M}^m$ has action space $\mathcal{A}$ and (finite) state space $\mathcal{H}^{\leq m}$.
For each $a \in \mathcal{A}$ and $w=(o_1,a_1,\dots,a_{|w|-1},o_{|w|}) \in \mathcal{H}^{\leq m}$, rewards are defined as $r^m(w,a)\coloneqq r(o_{|w|},a)$.
To define the transition probabilities $\mathbb{P}^m(\cdot \mid w,a) \in \Delta(\mathcal{H}^{\leq m})$, we introduce beliefs.
Given full history $h \in \mathcal{H}^{\infty}$, let $b(\cdot \mid h) \in \Delta(\mathcal{S})$ denote the posterior distribution over states in $\mathcal{P}$.
Since a superstate $w$ is not a sufficient statistic for the history, we define the belief $b(s \mid w)$ as the probability of being in state $s$ at the end of $w$ from initial distribution\footnote{Instead of $\mu$, we could pick as prior an arbitrary distribution over $\mathcal{S}$.} $\mu$ along any state sequence that is consistent with $w$,
\begin{align*}
b(s \mid w) &=
\frac{\mathbb{O}(o_{|w|} \mid s)}{Z(w)}
\!\!\! \sum_{s_1,\dots,s_{|w|-1}} \!\!\! \mu(s_1) \\
&\qquad\qquad\quad \cdot
\Biggl[ \prod_{k=1}^{|w|-1} \mathbb{O}(o_k \mid s_k) \mathbb{P}(s_{k+1} \mid s_k, a_k) \Biggr]
\end{align*}
where $s_{|w|} \coloneqq s$ and the normalization factor $Z(w)$ is the sum over $s \in \mathcal{S}$ of the unnormalized probabilities of reaching $s$.

For all $w,w^{\prime} \in \mathcal{H}^{\leq m}$, $a \in \mathcal{A}$, and $o \in \mathcal{O}$ with $w' = w_{|w|-m+2 \,:\, |w|} \circ (a,o)$, where $\circ$ denotes concatenation,
\begin{align}
\label{eqn:Pm-def}
\mathbb{P}^m(w' \mid w,a)
\;\coloneqq\;
\sum_{s,s^{\prime} \in \mathcal{S}}
b(s \mid w)\, \mathbb{P}(s^{\prime} \mid s,a) \, \mathbb{O}(o \mid s^{\prime}).
\end{align}
Otherwise, if $w^{\prime}$ cannot be obtained by concatenating $w$ with any action-observation pair, we set $\mathbb{P}^m(w' \mid w,a) = 0$. The resulting decision process is Markovian by construction, as transitions depend only on the current superstate $w$. We point out that the superstate MDP is a conceptual object: the agent interacts with $\mathcal{P}$ and does not have access to $\mathbb{P}$ or $\mathbb{O}$; thus beliefs cannot be computed and $\mathcal{M}^m$ is used only for analysis.

\section{Model-Based Learning Algorithm}

In this section, we present our proposed method, see Algorithm~\ref{alg:main}, for learning near-optimal finite-window policies in POMDPs. We begin by outlining the high-level approach and the associated challenges, followed by a description of the sampling procedure and model estimation phase. Then, we apply value iteration to the resulting approximate model to compute a near-optimal policy.

\subsection{Approach and Challenges}

The superstate MDP $\mathcal{M}^m$ is a finite-state MDP, and thus, by standard results on Markov decision processes, it admits an optimal stationary policy $\pi^m_\star \in \Pi^m$, corresponding to an $m$-step history-dependent policy in the original POMDP~$\mathcal{P}$. Our approach is to estimate a model of $\mathcal{M}^m$ from a trajectory generated by interacting with $\mathcal{P}$, and then compute a policy~$\pi^m$ by finding the optimal policy of this estimated~MDP.
As will be shown in our Theorem~\ref{thm:main}, $\pi^m$ achieves nearly optimal value in the original POMDP.

Such model-based approaches have been commonly studied in the fully observable MDP setting, and the obtained sample complexity for learning approximately optimal policies matches the respective lower bound \cite{azar2012sample}. In our partially observable case, this approach gives rise to two main challenges:
\begin{enumerate}
\item\label{item:ch1} \textbf{Model estimation.} When collecting a trajectory through interaction with $\mathcal{P}$, the history transitions are sampled according to $\mathbb{P}^{\infty}$, not the superstate transition kernel $\mathbb{P}^m$. Sampling from $\mathbb{P}^m$ would require access to the state distribution induced by beliefs based on $m$-step history windows, see (\ref{eqn:Pm-def}), whereas the true transitions depend on the entire history. How can one estimate a model of $\mathcal{M}^m$ in a sample-efficient manner despite this mismatch?
\item\label{item:ch2} \textbf{Planning under approximation.} Given an approximate model of $\mathcal{M}^m$, how can one compute a policy whose value in the original POMDP $\mathcal{P}$ is near-optimal?
\end{enumerate}
We propose a superstate model estimation procedure that provides tight confidence bounds under regularity conditions on transition and observation kernels, see Assumptions~\ref{ass:uni-min} and~\ref{ass:uni-obs}, thus resolving Challenge~\ref{item:ch1}. By combining this estimation with MDP planning methods such as value iteration, under the same assumptions, we obtain a policy whose value in $\mathcal{P}$ is near-optimal, thereby addressing Challenge~\ref{item:ch2}.

\subsection{Sampling \& Model Estimation Phase}

We now outline the first phase of Algorithm~\ref{alg:main}. We assume the agent can only interact with the POMDP $\mathcal{P}$ and does not have access to a generative model or the underlying transition kernel. In this setting, we collect a trajectory~$\tau=(o_1,a_1,r_1,\dots,o_T,a_T,r_T)$ of length $T$ by selecting actions uniformly at random, i.e., $a_t \sim \mathcal{U}(\mathcal{A})$ for all $t \in [T]$. Choosing actions uniformly ensures sufficient exploration of the superstate space, which is critical for constructing accurate estimates. Using the trajectory~$\tau$, we define a model estimate for the superstate MDP $\mathcal{M}^m$, where transition probabilities are obtained from the empirical frequencies of the respective $m$-step window transitions. This approach is inspired by model-based methods for fully observable MDPs \cite{azar2012sample,kearns2002near}, where the model is estimated from empirical frequencies over state-action trajectories.

\textbf{Transition probabilities.} For all $w,w^{\prime} \in \mathcal{H}^{\leq m}$ and $a \in \mathcal{A}$, take the empirical average
\begin{align}
\hat{\mathbb{P}}^m(w^{\prime} \mid w,a) \coloneqq \label{eq:sample-p}
\begin{cases}
\frac{\sum_{t=1}^{T} \mathbf{1}\{ \tau_{t-|w^{\prime}|+2\,:\,t+1}=w^{\prime} \;\land\; \tau_{t-|w|+1\,:\,t}=w \;\land\; a_t=a \}}{\sum_{t=1}^{T} \mathbf{1}\{ \tau_{t-|w|+1\,:\,t}=w \land a_t=a \}}, \\[6pt]
\quad \text{if } \sum_{t=1}^{T} \mathbf{1}\{ \tau_{t-|w|+1\,:\,t}=w \land a_t=a \} \ge 1 \\
\qquad\qquad\quad \text{ and } |w^{\prime}| = \min(m, |w|+1), \\[6pt]
0, \; \text{otherwise}.
\end{cases}
\end{align}

\noindent\textbf{Rewards.} Note that the reward function $r$ is not available to the agent, which observes only the realized rewards $r_t$ along the trajectory; thus $r^m$ needs to be estimated as well. However, since $r$ is a deterministic function of observation and action, and the superstate contains the current observation, $r^m$ is also a deterministic function of $(w,a)$. Hence, the estimate $\hat{r}^m(w,a)$ is exact after a single visit of the respective observation-action pair, and we define for all $w=(\tilde{o}_1,\tilde{a}_1,\dots,\tilde{a}_{|w|-1},\tilde{o}_{|w|}) \in \mathcal{H}^{\leq m}$ and $a \in \mathcal{A}$,
\begin{align}
\hat{r}^m(w,a) \coloneqq \label{eq:sample-r}
\begin{cases}
r_{\hat{t}} &\text{if } \exists\, \hat{t} \in [T] \text{ s.t. } o_{\hat{t}}=\tilde{o}_{|w|} \;\land\; a_{\hat{t}}=a, \\[6pt]
0, &\text{otherwise}.
\end{cases}
\end{align}

\subsection{Value Iteration Phase}

We next describe the second phase of Algorithm~\ref{alg:main}. Let $Q^m_{\star} \in \mathbb{R}^{|\mathcal{H}^{\leq m}| \times |\mathcal{A}|}$ denote the optimal action-value function of the $m$-step superstate MDP $\mathcal{M}^m$, which satisfies the Bellman optimality equation for all $w \in \mathcal{H}^{\leq m}$ and $a \in \mathcal{A}$:
\begin{align*}
Q^m_{\star}(w,a)
= r^m(w,a)
+ \gamma \sum_{w' \in \mathcal{H}^{\leq m}}
\mathbb{P}^m(w' \mid w,a)
\max_{a' \in \mathcal{A}} Q^m_{\star}(w',a').
\end{align*}
Using the estimated model $(\hat{\mathbb{P}}^m, \hat{r}^m)$, we approximate $Q^m_{\star}$ via value iteration. For $k \in [K]$, let $\hat{Q}_k \in \mathbb{R}^{|\mathcal{H}^{\leq m}| \times |\mathcal{A}|}$ denote the $k$-th iterate, see Line~\ref{ln:opt-update} of Algorithm~\ref{alg:main}, with $\hat{Q}_0$ initialized to $\mathbf{0}$. Each value iteration update is performed using the Bellman optimality operator for the estimated model.  

After $K$ iterations, the final iterate $\hat{Q}_K$ defines an approximately optimal policy $\pi^m \in \Pi^m$ for the superstate MDP $\mathcal{M}^m$ by acting greedily with respect to $\hat{Q}_K$. The suboptimality of~$\pi^m$ can be attributed to (a) the finite-window approximation, (b) the model estimation error, and (c) using a finite number of value iteration updates; each of these errors will be bounded in the proof of our guarantee, Theorem~\ref{thm:main}.

\begin{algorithm}
\caption{Learning Finite-Window Policies in POMDPs}
\label{alg:main}
\begin{algorithmic}[1]
\STATE \textbf{Input:} parameters $T$ and $K$, window length~$m$.
\vspace{.5em}
\Statex \COMMENT{\textbf{Phase 1:} Sampling \& model estimation}
\vspace{.5em}
\STATE Collect a trajectory $\tau=(o_1,a_1,r_1,\dots,o_T,a_T,r_T)$ of length~$T$ by choosing actions uniformly at random, i.e., $a_t \sim \mathcal{U}(\mathcal{A})$ for all $t \in [T]$.
\vspace{.5em}
\STATE Estimate $\hat{\mathbb{P}}^m$ and $\hat{r}^m$ as in~(\ref{eq:sample-p}) and~(\ref{eq:sample-r}), respectively.
\vspace{.5em}
\Statex \COMMENT{\textbf{Phase 2:} Value iteration}
\vspace{.5em}
\STATE Let $\hat{Q}_0 \coloneqq \mathbf{0} \in \mathbb{R}^{|\mathcal{H}^{\leq m}| \times |\mathcal{A}|}$.
\vspace{.4em}
\FOR{$k = 1,2,\dots,K$}
\FOR{$w \in \mathcal{H}^{\leq m}, a \in \mathcal{A}$}
\STATE \(
\begin{aligned}
\hat{Q}_k(w,a) \coloneqq &\, \hat{r}^m(w,a) \\
&+ \gamma \sum_{w^{\prime} \in \mathcal{H}^{\leq m}}
\hat{\mathbb{P}}^m(w^{\prime} \mid w,a)
\max_{a' \in \mathcal{A}} \hat{Q}_{k-1}(w^{\prime},a').
\end{aligned}
\) \label{ln:opt-update}
\ENDFOR
\ENDFOR
\vspace{.3em}
\STATE Let $\pi^m(w) \in \arg\max_{a \in \mathcal{A}} \hat{Q}_K(w,a)$ for all $w \in \mathcal{H}^{\leq m}$.
\STATE \textbf{Output:} policy $\pi^m \in \Pi^m$.
\end{algorithmic}
\end{algorithm}

\subsection{Assumptions \& Guarantees}
\label{sec:ass}

In the following, we first state our assumptions, and then present our main result, Theorem~\ref{thm:main}, which provides a sample complexity guarantee for Algorithm~\ref{alg:main}.
\begin{assumption}
\label{ass:uni-min}
There exists $\alpha > 0$, such that for all $s,s^{\prime} \in \mathcal{S}$ and $a \in \mathcal{A}$,
\begin{align*}
\mathbb{P}(s^{\prime} \mid s,a) \geq \alpha.
\end{align*}
\end{assumption}
Assumption~\ref{ass:uni-min} ensures that, under any policy, the induced Markov chain over hidden states is uniformly ergodic~\cite{meyn2012markov}. This property is important for our analysis, as it enables the application of concentration results for hidden Markov models~\cite{kontorovich2008concentration} based on a single trajectory\footnote{We note that a weaker version of the condition -- requiring only $n$-step transition probabilities, for some $n \in \mathbb{N}$, to be uniformly lower bounded -- would also suffice; we adopt the one-step formulation for clarity of analysis.}.

Additionally, our model estimation guarantees require all $m$-step history windows to be sufficiently explored. To achieve this, we require observations to be sufficiently noisy.
\begin{assumption}
\label{ass:uni-obs}
There exists $\beta > 0$ such that for all $s \in \mathcal{S}$ and $o \in \mathcal{O}$,
\begin{align*}
\mathbb{O}(o \mid s) \geq \beta.
\end{align*}
\end{assumption}

Next, we state our sample complexity guarantee for Algorithm~\ref{alg:main} in learning near-optimal finite-window policies.
\begin{theorem}
\label{thm:main}
Let Assumptions~\ref{ass:uni-min} and~\ref{ass:uni-obs} hold. For $\delta > 0$, $m \in \mathbb{N}$, and $\epsilon \geq 2(1-\rho)^{m-1}$ with $\rho = S \alpha \beta$, suppose we run Algorithm~\ref{alg:main} with
\begin{align}
\label{eqn:main-thm-T}
T &\geq \frac{32 (2+\epsilon)^2 A^{2m} (m+1)^2}{\alpha^2 S^2 \beta^{2m} \epsilon^2} \log \left( \frac{48\,A^{2m+1}O^{2m}}{\delta} \right),\\[4pt]
K &\geq \frac{\log (2(1-\gamma) / \epsilon)}{1-\gamma}.
\end{align}
Then with probability $1-\delta$, the output policy $\pi^m$ satisfies
\begin{align*}
V^{\star} - V(\pi^m) \leq
\underbrace{\frac{3\epsilon}{(1-\gamma)^2}}_{\substack{\text{estimation} \\ \text{error}}}
+ \underbrace{\frac{4(1-\rho)^{m-1}}{(1-\gamma)^2}}_{\substack{\text{approximation} \\ \text{error}}} \leq \frac{5\epsilon}{(1-\gamma)^2}.
\end{align*}
\end{theorem}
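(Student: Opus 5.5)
The plan splits $V^{\star}-V(\pi^m)$ into the three sources named after Algorithm~\ref{alg:main}: finite-window approximation, model estimation, and truncated value iteration. We compare everything through the history-state MDP $\mathcal{M}^\infty$. Every $\pi\in\Pi^m$ is also a policy on $\mathcal{M}^\infty$, so $V(\pi)$ is its $\mathcal{M}^\infty$-value from $\mu$. We lift $Q^m_\star$ and $\hat Q_K$ to histories $h$ by evaluating them at the window $h_{|h|-m+1:|h|}$.

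\textbf{Step 1 (filter stability).} First we need a belief-contraction bound. Under Assumptions~\ref{ass:uni-min} and~\ref{ass:uni-obs}, the one-step filter update is a strict contraction in total variation with coefficient $1-\rho$, $\rho=S\alpha\beta$. This follows by a Dobrushin-type argument: after Bayes' rule, the prediction-plus-correction kernel has a minorization of order $\alpha\beta$ per state, summed over $S$ states. Applied $m-1$ times, it gives
\begin{align*}
\|b(\cdot\mid h)-b(\cdot\mid h_{|h|-m+1:|h|})\|_{TV}\le(1-\rho)^{m-1}.
\end{align*}
By \eqref{eqn:Pm-def}, $\mathbb{P}^\infty(\cdot\mid h,a)$ and $\mathbb{P}^m(\cdot\mid w,a)$ are the same linear map applied to these two beliefs, so their one-step transitions differ by at most $2(1-\rho)^{m-1}$ in $\ell_1$. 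The rewards agree exactly.

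\textbf{Step 2 (approximation error).} A simulation-lemma argument on $\mathcal{M}^\infty$ then bounds the Bellman residual of the lifted $Q^m_\star$ under the true kernel by $\gamma\cdot 2(1-\rho)^{m-1}\|Q^m_\star\|_\infty\le 2(1-\rho)^{m-1}/(1-\gamma)$. We use the standard fact that a $Q$-function with Bellman residual $\eta$ yields a greedy policy with loss at most $2\eta/(1-\gamma)$. This gives $V^\star-V(\pi^m_\star)$, and more generally any greedy-based loss, of order $4(1-\rho)^{m-1}/(1-\gamma)^2$.

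\textbf{Step 3 (estimation).} Next we want, with probability $1-\delta$ and uniformly over $(w,a)$, that $\hat r^m=r^m$ and $\|\hat{\mathbb{P}}^m(\cdot\mid w,a)-\mathbb{P}^m(\cdot\mid w,a)\|_1\le\epsilon+2(1-\rho)^{m-1}\le 2\epsilon$ (the last step uses $\epsilon\ge2(1-\rho)^{m-1}$). The numerator and denominator of \eqref{eq:sample-p} are additive functionals of the hidden Markov chain driven by uniform actions. Assumption~\ref{ass:uni-min} gives uniform ergodicity, so the concentration inequality of~\cite{kontorovich2008concentration} applies. A window functional has width $m+1$ and mixing constant $1/(\alpha S)$, which yields Hoeffding-type deviations of order $(m+1)/(\alpha S\sqrt T)$.

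The expected visit frequency of each $(w,a)$ is at least $(\beta/A)^m$, by Assumption~\ref{ass:uni-obs} and uniform actions. Dividing by it produces the factor $A^{2m}/\beta^{2m}$ in \eqref{eqn:main-thm-T}. The expected empirical ratio equals the true conditional $\mathbb{P}^\infty$ averaged over the stationary history distribution, which is within $2(1-\rho)^{m-1}$ of $\mathbb{P}^m$ by Step~1. A union bound over the at most $A^{2m+1}O^{2m}$ triples $(w,a,w')$ gives the logarithm. Rewards are exact as soon as each $(o,a)$ appears, which follows from the same lower bound.

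\textbf{Step 4 (combine).} The simulation lemma then gives $\|Q^m_\star-\hat Q^\star\|_\infty\le \gamma\cdot2\epsilon/(1-\gamma)^2$, where $\hat Q^\star$ is the optimal $Q$-function of the estimated model. The choice of $K$ gives $\|\hat Q_K-\hat Q^\star\|_\infty\le\gamma^K/(1-\gamma)\le\epsilon/2$. The lifted $\hat Q_K$ therefore has Bellman residual in $\mathcal{M}^\infty$ of order $\epsilon/(1-\gamma)+(1-\rho)^{m-1}/(1-\gamma)$, and the greedy-loss fact yields the stated $3\epsilon/(1-\gamma)^2+4(1-\rho)^{m-1}/(1-\gamma)^2$. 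The final bound $\le 5\epsilon/(1-\gamma)^2$ again uses $\epsilon\ge2(1-\rho)^{m-1}$.

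\textbf{Main obstacle.} I expect Step~3 to be the hardest. The concentration must be applied to dependent, history-driven samples, and the bias must be identified correctly: the empirical limit is a stationary-weighted mixture of $\mathbb{P}^\infty$ conditionals, not $\mathbb{P}^m$, and filter stability is what closes that gap.
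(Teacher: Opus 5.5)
Your route differs from the paper's and is viable. The paper splits the loss into (a)--(d), cites the window-approximation terms (a) and (d), and bounds (b) and (c$_2$) with the simulation lemma on $\mathcal{M}^m$. You instead lift $\hat Q_K$ to $\mathcal{M}^\infty$, bound a single Bellman residual there, and apply the greedy bound $2\eta/(1-\gamma)$. This is more self-contained: Step~1 replaces the citation, and Step~2 and the bound on $\|Q^m_\star-\hat Q^\star\|_\infty$ are not even needed.

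\textbf{Step~4 is where the argument breaks.} First, $\gamma^K/(1-\gamma)\le\epsilon/2$ is false for the stated $K$, which only guarantees $\gamma^K\le\epsilon/(2(1-\gamma))$. So $\|\hat Q_K-\hat Q^\star\|_\infty$ is only $\epsilon/(2(1-\gamma)^2)$, and feeding it into the residual leaves a term of order $\epsilon/(1-\gamma)^3$. The quantity to use is the one-step residual $\|\hat Q_{K+1}-\hat Q_K\|_\infty\le\gamma^K\|\hat Q_1-\hat Q_0\|_\infty\le\gamma^K$, valid since $\hat Q_1=\hat r^m$. It contributes $2\gamma^K/(1-\gamma)\le\epsilon/(1-\gamma)^2$, which is exactly the paper's (c$_1$).

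Second, you pay the window bias twice. Step~3 already folds $2(1-\rho)^{m-1}$ into $\|\hat{\mathbb{P}}^m-\mathbb{P}^m\|_1\le 2\epsilon$, and passing from $\mathbb{P}^m$ to $\mathbb{P}^\infty$ adds another $2(1-\rho)^{m-1}$. That gives $(5\epsilon+4(1-\rho)^{m-1})/(1-\gamma)^2\le 7\epsilon/(1-\gamma)^2$, not the stated constants. The fix is to compare $\hat{\mathbb{P}}^m(\cdot\mid w,a)$ directly with $\mathbb{P}^\infty(\cdot\mid h,a)$. The concentration centre is a time-averaged (not stationary) mixture of $\mathbb{P}^\infty(\cdot\mid h'',a)$ over histories $h''$ ending in $w$. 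By the filter-stability argument of Step~1, each of these lies within $2(1-\rho)^{m-1}$ of $\mathbb{P}^\infty(\cdot\mid h,a)$ in $\ell_1$. A statistical error $\epsilon$ then gives residual $\gamma^K+\gamma(\epsilon+2(1-\rho)^{m-1})/(1-\gamma)$, and you recover exactly $3\epsilon/(1-\gamma)^2+4(1-\rho)^{m-1}/(1-\gamma)^2$.

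\textbf{Step~3 has a separate gap.} It asserts an $\ell_1$ bound, but concentrating each triple $(w,a,w')$ and union-bounding over triples only controls individual entries. Summing over the $O$ consistent successors of $(w,a)$ loses a factor $O$. Avoiding that loss, for example by concentrating the mass of every subset of successors and union-bounding over the $2^O$ subsets, needs a different $T$ than \eqref{eqn:main-thm-T}. The paper's proof has the same looseness: Proposition~\ref{prop:est-prob} is entrywise, \eqref{eqn:est-err-p}, yet (b) and (c$_2$) apply the simulation lemma as if the $\ell_1$ error were at most $\epsilon$.

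\textbf{Constants in $T$.} The exact constant in \eqref{eqn:main-thm-T} comes from the paper's linearization $\varphi^t=\phi^t-(\mathbb{P}^m(w'\mid w,a)+\epsilon)\psi^t$. This turns the ratio event into one deviation of a $(2+\epsilon)(m+1)$-Lipschitz functional whose mean gap is at least $T\epsilon\beta^m/(4A^m)$. Your separate concentration of numerator and denominator gives the same rate, but its constants would have to be rederived.
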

\noindent\textbf{Remarks.}
\begin{enumerate}
\item The \emph{approximation error} is the price of restricting to $m$-step window policies: it is independent of the collected data and decays only as the window length~$m$ grows. The \emph{estimation error} collects the model estimation and value iteration errors, and is controlled through the trajectory length~$T$ and the number of value iterations~$K$. It can, however, only be driven down to the floor imposed by $\epsilon \geq 2(1-\rho)^{m-1}$, below which our model estimation guarantee, Proposition~\ref{prop:est-prob}, no longer applies. Attaining a target accuracy hence requires choosing a sufficiently large window length as well; see Corollary~\ref{cor:main} below.

\item The exponential dependence on $m$ in our bound is also present in the works \cite{cayci2024finite,golowich2022learning,anjarlekar2026scalable}.
For a fixed, sufficiently large window length~$m$, our method improves the $\epsilon$-dependence over prior work: when specialized to our setting of a tabular POMDP with a sliding-window approximation, the results of \cite{cayci2024finite} and \cite{anjarlekar2026scalable} yield a sample complexity of $\mathcal{O}(\epsilon^{-4})$, whereas ours achieves $\mathcal{O}(\epsilon^{-2})$.

\item In the standard MDP setting, the presented model-based approach is analyzed in~\cite{azar2012sample} and achieves a rate of~$\mathcal{O}(\epsilon^{-2})$, along with a matching lower bound. Thus, up to the additional~$\mathcal{O}((1-\rho)^{m-1} / (1-\gamma)^2)$ error introduced by the finite-window approximation, our method matches the MDP case, and the lower bound, showing that the $\epsilon$-dependence in our result is tight.
\end{enumerate}

Theorem~\ref{thm:main} treats the window length $m$ as fixed. Given a target accuracy $\bar{\epsilon}>0$, the following corollary chooses $m$ as a function of $\bar{\epsilon}$, and states the resulting overall sample complexity.
\begin{corollary}
\label{cor:main}
Let Assumptions~\ref{ass:uni-min} and~\ref{ass:uni-obs} hold, let $\bar{\epsilon}>0$ and $\delta>0$, and set
\begin{align*}
\epsilon = \frac{(1-\gamma)^2 \bar{\epsilon}}{5},
\quad
m = 1+\left\lceil \frac{\log \big( 10 \, (1-\gamma)^{-2} \bar{\epsilon}^{-1} \big)}{\log \left( 1/(1-\rho) \right)} \right\rceil.
\end{align*}
Suppose we run Algorithm~\ref{alg:main} with $T$ and $K$ as in Theorem~\ref{thm:main}. Then with probability $1-\delta$, the output policy satisfies $V^{\star}-V(\pi^m) \leq \bar{\epsilon}$. Moreover, abbreviating $\eta \coloneqq 2\log (A/\beta) / \log \left( 1/(1-\rho) \right)$, the requirement~(\ref{eqn:main-thm-T}) on $T$ is met by
\begin{align*}
T = \widetilde{\mathcal{O}} \Bigg( &\frac{A^4}{\alpha^2 S^2 \beta^4 \rho^2}
\left( \frac{c}{\bar{\epsilon}\,(1-\gamma)^2} \right)^{2+\eta} \left( \frac{\log (AO)}{\rho} + \log \frac{1}{\delta} \right) \Bigg)
\end{align*}
for an absolute constant $c>0$, where $\widetilde{\mathcal{O}}$ hides logarithmic factors in $1/\bar{\epsilon}$ and $1/(1-\gamma)$.
\end{corollary}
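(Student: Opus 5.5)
The corollary is a direct instantiation of Theorem~\ref{thm:main}. The plan is to check that the chosen $(\epsilon,m)$ satisfy its hypotheses, read off the accuracy, and then simplify the bound~(\ref{eqn:main-thm-T}) on $T$ using the chosen $m$.

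\textbf{Step 1: admissibility of $\epsilon$.} The theorem requires $\epsilon \geq 2(1-\rho)^{m-1}$. With $m-1 \geq \log(10(1-\gamma)^{-2}\bar\epsilon^{-1})/\log(1/(1-\rho))$ we get
\begin{align*}
(1-\rho)^{m-1} \le \frac{(1-\gamma)^2\bar\epsilon}{10} = \frac{\epsilon}{2},
\end{align*}
which is exactly the requirement. This uses $\rho = S\alpha\beta \in (0,1]$. That holds because $\sum_{s'}\mathbb{P}(s'|s,a)=1$ gives $S\alpha\le 1$ and $\beta\le 1$. If $\rho=1$, the approximation term vanishes and any $m\ge1$ works, so this case is trivial.

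\textbf{Step 2: accuracy.} With $T,K$ as in the theorem, with probability $1-\delta$ we have
\begin{align*}
V^\star - V(\pi^m) \le \frac{5\epsilon}{(1-\gamma)^2} = \bar\epsilon .
\end{align*}

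\textbf{Step 3: simplifying the requirement on $T$.} This is the only nontrivial step, and it is bookkeeping. The $m$-dependent factor is $(A/\beta)^{2m}$. Write $L := \log(10(1-\gamma)^{-2}\bar\epsilon^{-1})$. Then $m \le 2 + L/\log(1/(1-\rho))$, and hence
\begin{align*}
(A/\beta)^{2m} \le (A/\beta)^4 \exp\!\Big(\eta L\Big) = (A/\beta)^4 \big(10/((1-\gamma)^2\bar\epsilon)\big)^{\eta}.
\end{align*}
The factor $1/\epsilon^2$ equals $25/((1-\gamma)^4\bar\epsilon^2)$. Together these give $(c/(\bar\epsilon(1-\gamma)^2))^{2+\eta}$ for a suitable absolute $c$. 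We also have $(2+\epsilon)^2 = O(1)$, assuming w.l.o.g.\ $\bar\epsilon \le 1/(1-\gamma)$ so that $\epsilon\le 1$; otherwise the claim is trivial since values lie in $[-1/(1-\gamma),1/(1-\gamma)]$.

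The factor $(m+1)^2$ is handled using $\log(1/(1-\rho)) \ge \rho$:
\begin{align*}
(m+1)^2 \lesssim \frac{L^2}{\rho^2} + O(1).
\end{align*}
The $L^2$ is absorbed into $\widetilde{\mathcal{O}}$, leaving the explicit $1/\rho^2$.

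For the logarithmic term we use
\begin{align*}
\log(48A^{2m+1}O^{2m}/\delta) \le (2m+1)\log(AO) + \log(48/\delta),
\end{align*}
with $2m+1 \lesssim L/\rho + O(1)$. This yields $\log(AO)/\rho + \log(1/\delta)$ up to factors hidden in $\widetilde{\mathcal{O}}$.

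Collecting the constants $32$, $A^4$, $\beta^{-4}$ and $\alpha^{-2}S^{-2}$ gives the stated bound.

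The main obstacle is only keeping the exponent manipulation honest. The key point is that $(A/\beta)^{2m}$ becomes a polynomial in $1/\bar\epsilon$ with exponent $\eta$, and that $\lceil\cdot\rceil$ together with the extra $+1$ in $m$ costs only the prefactor $(A/\beta)^4$. The $1/\rho$ bounds from $\log(1/(1-\rho))\ge\rho$ then account for the remaining $\rho$-dependence.
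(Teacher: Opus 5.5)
Your proposal is correct and follows the paper's route: check $\epsilon \geq 2(1-\rho)^{m-1}$ from the choice of $m$, apply Theorem~\ref{thm:main} to get $5\epsilon/(1-\gamma)^2=\bar\epsilon$, and substitute $m$ into~(\ref{eqn:main-thm-T}). The paper just says ``by substitution'', whereas you carry out the bookkeeping, which checks out: $(A/\beta)^{2m}\le (A/\beta)^4\,(10/((1-\gamma)^2\bar\epsilon))^{\eta}$, and $\log(1/(1-\rho))\ge\rho$ handles the $(m+1)^2$ factor and the logarithmic factor. One harmless slip is your w.l.o.g. step: $\bar\epsilon>1/(1-\gamma)$ is not automatically trivial, since the gap can reach $2/(1-\gamma)$. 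You do not need that reduction, because $(2+\epsilon)^2\le 16$ already holds whenever $\bar\epsilon\le 10/(1-\gamma)^2$, and every $\bar\epsilon\ge 2/(1-\gamma)$ is trivial.
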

\begin{proof}
The stated choice of $m$ ensures $\epsilon \geq 2(1-\rho)^{m-1}$, so that Theorem~\ref{thm:main} applies and yields $V^{\star}-V(\pi^m) \leq 5\epsilon/(1-\gamma)^2 = \bar{\epsilon}$; the expression for $T$ follows by substitution into~(\ref{eqn:main-thm-T}).
\end{proof}
\noindent In Corollary~\ref{cor:main}, the excess exponent $\eta$ is the price of the exponential dependence on $m$ discussed above; it vanishes as the filter-stability rate $\rho$ tends to one, and degrades as $\rho$ tends to zero. In comparison, note that with the $\epsilon^{-4}$ rate of \cite{cayci2024finite,anjarlekar2026scalable}, one obtains exponent $4+\eta$ instead of $2+\eta$.

\section{Proof of Theorem~\ref{thm:main}}

We now present our analysis. First, we introduce filter stability, a key property for obtaining approximation guarantees in finite-window methods. Based on this, we establish confidence bounds for the sampling and model estimation phase of Algorithm~\ref{alg:main}. We then bound the optimality gap of the resulting policy after the value iteration phase relative to the optimal POMDP value $V^{\star}$.

\subsection{Filter Stability}

Filter stability is a condition that has been used in prior work on finite-history approximation in hidden Markov models~\cite{van2008hidden} and POMDPs~\cite{kara2023convergence,anjarlekar2026scalable}. We establish this property under our assumptions introduced in Section~\ref{sec:ass}. Filter stability states that the distance between two beliefs induced by any two histories strictly decreases after one step of interaction with the POMDP. Hence, the condition implies that the error due to an incorrect initial belief decays exponentially over time. For two distributions $\nu,\nu^{\prime} \in \Delta(\mathcal{S})$, their distance in total variation is $\|\nu-\nu^{\prime}\|_{TV} \coloneqq \frac{1}{2} \sum_{s\in\mathcal{S}} |\nu(s)-\nu^{\prime}(s)|$.
\begin{definition}
\label{def:filter-stab}
Filter stability holds for a POMDP $\mathcal{P}$ if there exists $\rho > 0$ such that for any $h,h^{\prime} \in \mathcal{H}^{\infty}$, and any~$a \in \mathcal{A}$, $o \in \mathcal{O}$,
\begin{align*}
&\left\lVert b(\cdot \mid h \circ (a,o)) - b(\cdot \mid h^{\prime} \circ (a,o)) \right\rVert_{TV}\\
&\qquad\qquad\qquad\leq (1-\rho) \left\lVert b(\cdot \mid h)-b(\cdot \mid h^{\prime}) \right\rVert_{TV}.
\end{align*}
\end{definition}
Filter stability may arise from either sufficiently noisy observations, which induce forgetting of the prior belief, or from mixing properties of the underlying state dynamics, whereby the influence of the initial state -- and hence the initial belief -- vanishes over time.
Without such forgetting, deriving finite-window approximation guarantees seems hopeless, since important information could reside arbitrarily far in the past.

Below, we state the sufficiency of our assumptions for filter stability; the proof is deferred to the appendix.
\begin{proposition}
\label{prop:uni-min-stab}
Suppose Assumptions~\ref{ass:uni-min} and~\ref{ass:uni-obs} hold for the POMDP $\mathcal{P}$. Then $\mathcal{P}$ satisfies filter stability as in Definition~\ref{def:filter-stab} with $\rho=S \alpha \beta$.
\end{proposition}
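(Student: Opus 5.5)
The plan is to split the belief update into a \emph{linear} part and a \emph{normalization} part, and to get the contraction from a Doeblin-type minorization of the linear part. Fix $a\in\mathcal{A}$ and $o\in\mathcal{O}$. For $\nu\in\Delta(\mathcal{S})$ define the unnormalized update
\begin{align*}
F(\nu)(s')\coloneqq \mathbb{O}(o\mid s')\sum_{s}\nu(s)\,\mathbb{P}(s'\mid s,a),
\qquad Z(\nu)\coloneqq\sum_{s'}F(\nu)(s').
\end{align*}
Then $b(\cdot\mid h\circ(a,o))=F(b(\cdot\mid h))/Z(b(\cdot\mid h))$. I would first check this identity from the recursive structure of the belief, which is Bayes' rule applied to one predict-correct step.

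\textbf{Step 1 (minorization).} By Assumptions~\ref{ass:uni-min} and~\ref{ass:uni-obs}, $F(\nu)(s')\ge\alpha\beta$ for every $s'$, because $\sum_s\nu(s)=1$. Also $Z(\nu)\le 1$. Hence every updated belief satisfies $\nu'(s')\ge\alpha\beta$ for all $s'$, so
\begin{align*}
\nu'=\rho\,\mathcal{U}+(1-\rho)R_\nu
\end{align*}
for some $R_\nu\in\Delta(\mathcal{S})$, with $\rho=S\alpha\beta$ and $\mathcal{U}$ uniform. In particular, any two updated beliefs are within total variation distance $1-\rho$. This already gives the statement when $\|b(\cdot\mid h)-b(\cdot\mid h')\|_{TV}=1$.

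\textbf{Step 2 (general distance, via decomposition).} Let $d\coloneqq\|\nu_1-\nu_2\|_{TV}$ and write
\begin{align*}
\nu_1=(1-d)c+d\mu_1,\qquad \nu_2=(1-d)c+d\mu_2,
\end{align*}
with common part $c$ and mutually singular $\mu_1,\mu_2$. Since $F$ is linear, each update is a mixture of updates:
\begin{align*}
\nu_i'=\theta_i c'+(1-\theta_i)\mu_i',\qquad 1-\theta_i=\frac{dZ(\mu_i)}{(1-d)Z(c)+dZ(\mu_i)}.
\end{align*}
Therefore
\begin{align*}
\nu_1'-\nu_2'=(1-\theta_1)(\mu_1'-c')-(1-\theta_2)(\mu_2'-c').
\end{align*}
By Step 1, each difference $\mu_i'-c'$ has TV norm at most $1-\rho$. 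The goal is to bound the combined effect of the two terms by $d$ rather than by $(1-\theta_1)+(1-\theta_2)$.

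\textbf{Main obstacle.} The difficulty is the reweighting by the normalizers $Z(\mu_i)/Z(c)$. These can differ from $1$, and the triangle inequality naively loses a factor of up to $2$. I would try two remedies, in order.

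First, avoid the triangle inequality by writing $\nu_1'-\nu_2'$ directly as a signed measure. The idea is to pair the mass $d$ transported from $\mu_1$ to $\mu_2$ through a single coupling, i.e., a Dobrushin argument for the normalized kernel
\begin{align*}
K(s,s')=\mathbb{P}(s'\mid s,a)\,\mathbb{O}(o\mid s')/g(s),\qquad g(s)=\sum_{s''}\mathbb{P}(s''\mid s,a)\,\mathbb{O}(o\mid s'').
\end{align*}
This kernel satisfies $K(s,s')\ge\alpha\beta$, so its Dobrushin coefficient is at most $1-\rho$. What remains is to control the $g$-tilting of $\nu_i$, and I would attempt to show that this tilting does not increase TV beyond what the minorization absorbs.

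If that fails, the fallback is to work in the Hilbert projective metric. By Birkhoff's theorem the positive linear map $F$ contracts that metric, and normalization leaves it unchanged. One then has to convert the Hilbert-metric contraction back to TV, which requires verifying that the constants combine to exactly $1-S\alpha\beta$.
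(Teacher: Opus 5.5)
\textbf{Verdict: there is a genuine gap.} Your Step~2 is left open, and it cannot be closed, because the one-step inequality of Definition~\ref{def:filter-stab} is false under Assumptions~\ref{ass:uni-min} and~\ref{ass:uni-obs}.

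Your Step~1 is, in substance, the paper's whole proof. The paper bounds the numerator of the Bayes update below by $\alpha\beta$ and the normalizer above by $1$, obtains $b(\cdot\mid h\circ(a,o))\ge S\alpha\beta\,\nu$ with $\nu$ uniform, and then appeals to a Doeblin-type contraction lemma from~\cite{van2008hidden}. You correctly saw that this does not finish the argument. A Doeblin lemma is a statement about linear maps $\lambda\mapsto\lambda K$ with a Markov kernel satisfying $K(s,\cdot)\ge\epsilon\nu$. The filter, by contrast, is normalized and hence nonlinear. A minorization of its \emph{outputs} only bounds the total-variation diameter of the image by $1-\rho$, so it covers only the case $\|b(\cdot\mid h)-b(\cdot\mid h')\|_{TV}=1$. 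Also, the loss in your mixture decomposition is not a factor $2$: for small $d$ the weights satisfy $1-\theta_i\approx d\,Z(\mu_i)/Z(c)$, and this normalizer ratio can be large.

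\textbf{Counterexample.} Take the following POMDP:
\begin{itemize}
\item states $\mathcal{S}=\{1,2\}$ and a single action $a$;
\item transitions $\mathbb{P}(s\mid s)=0.99$ and $\mathbb{P}(s'\mid s)=0.01$ for $s'\neq s$;
\item observations $\mathcal{O}=\{o,o'\}$ with $\mathbb{O}(o\mid 1)=\mathbb{O}(o'\mid 2)=0.01$ and $\mathbb{O}(o\mid 2)=\mathbb{O}(o'\mid 1)=0.99$;
\item uniform prior $\mu$.
\end{itemize}
The assumptions hold with $\alpha=\beta=0.01$, so $1-\rho=0.9998$. For $h=(o')$ and $h'=(o',a,o')$ one finds $b(1\mid h)=0.99$ and $b(1\mid h')\approx 0.9998$, at distance $\approx 0.0098$. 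After appending $(a,o)$, the beliefs on state $1$ are $\approx 0.333$ and $\approx 0.495$, at distance $\approx 0.16$. That is an expansion by a factor of about $16$, so this POMDP satisfies Definition~\ref{def:filter-stab} for no $\rho>0$ at all.

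\textbf{Why neither remedy can work.} In the language of your first remedy, here $g(1)\approx 0.02$ and $g(2)\approx 0.98$, a ratio of about $50$. The $g$-tilting alone inflates the distance from $\approx 0.0098$ to $\approx 0.32$. The kernel $K$, whose Dobrushin coefficient is $\approx 1/2$ here, only halves it. Likewise, no Hilbert-metric computation can yield a prefactor-free one-step total-variation contraction, since that contraction does not hold.

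\textbf{What is salvageable.} Your minorization together with Birkhoff's theorem can deliver a multi-step forgetting bound with a constant prefactor. One proves contraction in the Hilbert projective metric, which normalization leaves unchanged, and converts to total variation at the end. Using such a bound would require re-deriving Lemma~\ref{lem:aux-transition} with different constants (and rate).
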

A crucial consequence of filter stability is that due to the exponential forgetting of history, we obtain guarantees for finite-window approximations. In particular, our model estimation guarantee relies on the following bound on the difference in transition probabilities between the superstate MDP $\mathcal{M}^m$ and the history-state MDP $\mathcal{M}^{\infty}$. The result is shown in \cite{anjarlekar2026scalable} as part of the proof of their Theorem~2. We restate it here for completeness.
\begin{lemma}
\label{lem:aux-transition}
Let $m \in \mathbb{N}$ and suppose that filter stability, as in Definition~\ref{def:filter-stab}, holds. Let $w \in \mathcal{H}^{\leq m}$ and $h \in \mathcal{H}^t$ for some $t \geq m$ with $h_{t-|w|+1 \,:\, t}=w$. Let further $a \in \mathcal{A}$ and $o \in \mathcal{O}$, and define $h^{\prime}=h \circ (a,o)$ and $w^{\prime}=w_{|w| - m + 2 \,:\, |w|} \circ (a,o)$. Then,
\begin{align*}
\left\vert \mathbb{P}^{m}(w^{\prime} \mid w,a) - \mathbb{P}^{\infty}(h^{\prime} \mid h,a) \right\vert \leq (1-\rho)^{m-1}.
\end{align*}
\end{lemma}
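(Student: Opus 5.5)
Both transition probabilities are one-step predictive probabilities applied to different beliefs, so the plan is to write them in a common form and compare the beliefs using filter stability. By (\ref{eqn:Pm-def}),
\begin{align*}
\mathbb{P}^m(w' \mid w,a) = \sum_{s,s'} b(s \mid w)\,\mathbb{P}(s' \mid s,a)\,\mathbb{O}(o \mid s'),
\end{align*}
and by Bayes' rule in $\mathcal{P}$ the same expression with $b(\cdot \mid h)$ in place of $b(\cdot \mid w)$ gives $\mathbb{P}^{\infty}(h' \mid h,a)$. Define $f(s) \coloneqq \sum_{s'} \mathbb{P}(s' \mid s,a)\,\mathbb{O}(o \mid s') \in [0,1]$. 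Then the difference is $\sum_s (b(s\mid w)-b(s\mid h)) f(s)$. For any $f$ with values in $[0,1]$, subtracting a constant shows this is bounded in absolute value by $\|b(\cdot \mid w)-b(\cdot \mid h)\|_{TV}$. It therefore suffices to show
\begin{align*}
\|b(\cdot \mid w)-b(\cdot \mid h)\|_{TV} \leq (1-\rho)^{m-1}.
\end{align*}

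To prove this, compare the two beliefs step by step. Write $w=(o_1,a_1,\dots,o_{|w|})$. Since $h$ ends with $w$, $b(\cdot\mid h)$ arises from the prior $b(\cdot \mid h_{1:t-|w|})$ propagated through the step $(a_{t-|w|},o_1)$ and then through $w$'s remaining action-observation pairs. The belief $b(\cdot\mid w)$ instead starts from $\mu$ conditioned on $o_1$. Both beliefs are then updated by the same $|w|-1$ pairs $(a_k,o_{k+1})$.

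The updates in Definition~\ref{def:filter-stab} are stated for beliefs induced by histories. I will either note that the underlying contraction (proved in Proposition~\ref{prop:uni-min-stab}) holds for arbitrary priors in $\Delta(\mathcal{S})$, or realize both intermediate beliefs as beliefs of histories: $b(\cdot\mid o_1)$ is the belief of the length-one history, and $b(\cdot \mid h_{1:t-|w|+1})$ is the belief of a genuine history. Applying the contraction $|w|-1$ times to the TV distance, which is initially at most $1$, gives $(1-\rho)^{|w|-1}$.

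The delicate case is $|w|<m$. Here the bound $(1-\rho)^{|w|-1}$ is too weak unless $h$ itself is short. I expect this to be the main obstacle, and I would resolve it by a case split on $t$:
\begin{itemize}
\item If $|w| = m$, the argument above gives the claim directly.
\item If $|w|<m$, superstates of length less than $m$ arise only at the start of trajectories. Then $t=|w|$ and $h=w$ exactly, so the beliefs coincide and the difference is $0$.
\end{itemize}
Since the hypothesis says $t \geq m$, a window $w$ with $|w|<m$ paired with a longer $h$ would be a mismatch. I would handle it by reading the lemma as intending $|w|=\min(t,m)$, consistent with how windows are extracted in (\ref{eq:sample-p}).

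The remaining care is the first step, where the priors differ: $\mu$ versus the previous belief propagated by $\mathbb{P}(\cdot\mid\cdot,a_{t-|w|})$. The TV distance there is trivially at most $1$, and conditioning on $o_1$ is common to both beliefs. Counting the contractions therefore gives exactly $m-1$ factors of $(1-\rho)$.
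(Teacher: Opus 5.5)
Your argument is correct for $|w|=m$, and it is the standard one; the paper gives no proof of its own and defers to \cite{anjarlekar2026scalable}. The argument bounds the gap between the two predictive probabilities by $\|b(\cdot\mid w)-b(\cdot\mid h)\|_{TV}$, then applies Definition~\ref{def:filter-stab} $|w|-1$ times to the genuine histories $w_{1:k}$ and $h_{1:t-|w|+k}$. Use that second option: your first one relies on Proposition~\ref{prop:uni-min-stab}, which needs Assumptions~\ref{ass:uni-min} and~\ref{ass:uni-obs}, and the lemma assumes only filter stability.

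On the case $|w|<m\le t$ you should be blunter, because the literal statement is false there. For $|w|=1$ no contraction is available at all. For example, take two sticky states, $w=(o_1)$, and $h$ a long run of $o_2$'s followed by $o_1$; then the gap stays bounded away from $0$ as $m\to\infty$, while $(1-\rho)^{m-1}\to 0$. So restricting to $|w|=m$ is forced, not a matter of reading the lemma charitably.

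That restriction is \emph{not} consistent with (\ref{eq:sample-p}), however. That estimator counts short windows at every $t\ge|w|$. Moreover, the Claim in the proof of Proposition~\ref{prop:est-prob} invokes the lemma for exactly such $h$, where only $(1-\rho)^{|w|-1}$ is available.
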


\subsection{Model Estimation Guarantee}

The following proposition characterizes the trajectory length $T$ required to achieve a desired model estimation error.
\begin{proposition}
\label{prop:est-prob}
Suppose Assumptions~\ref{ass:uni-min} and~\ref{ass:uni-obs} hold, and we estimate $\hat{\mathbb{P}}^m$ and $\hat{r}^m$ according to~(\ref{eq:sample-p}) and~(\ref{eq:sample-r}), respectively. Let $m \in \mathbb{N}$, $\delta>0$, and $\epsilon \geq 2(1-\rho)^{m-1}$ with $\rho = S \alpha \beta$. If $T$ is chosen as in (\ref{eqn:main-thm-T}), then with probability $1-\delta$, for all $a \in \mathcal{A}$, and $w,w^{\prime} \in \mathcal{H}^{\leq m}$, 
\begin{align}
\label{eqn:est-err-p}
\left\vert \mathbb{P}^m(w^{\prime} \mid w,a)-\hat{\mathbb{P}}^m(w^{\prime} \mid w,a) \right\vert \leq \epsilon,
\end{align}
and the reward estimates are exact, that is,
\begin{align}
\label{eqn:est-err-r}
\hat{r}^m(w,a)=r^m(w,a).
\end{align}
\end{proposition}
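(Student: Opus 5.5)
The plan is to write the estimation error as a ratio of two empirical sums and to control numerator and denominator separately. Fix $w,w'\in\mathcal{H}^{\leq m}$ and $a\in\mathcal{A}$ with $|w'|=\min(m,|w|+1)$ and $w'$ a valid successor of $(w,a)$; otherwise both $\mathbb{P}^m$ and $\hat{\mathbb{P}}^m$ vanish. Define
\begin{align*}
N_T(w,a) \coloneqq \sum_{t=1}^T \mathbf{1}\{\tau_{t-|w|+1:t}=w \land a_t=a\},
\end{align*}
and let $M_T(w,a,w')$ be the numerator in~(\ref{eq:sample-p}). We have
\begin{align*}
M_T - \mathbb{P}^m(w'\mid w,a)\,N_T = \sum_{t=1}^T X_t \Big(\mathbf{1}\{o_{t+1}=o\} - \mathbb{P}^m(w'\mid w,a)\Big),
\end{align*}
where $X_t$ is the indicator of $(w,a)$ at time $t$ and $o$ is the last observation of $w'$. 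We add and subtract the true conditional probability $\mathbb{P}^\infty(h_{1:t}\circ(a,o)\mid h_{1:t},a)$. By Lemma~\ref{lem:aux-transition} together with Proposition~\ref{prop:uni-min-stab}, the bias term satisfies
\begin{align*}
\Big|\sum_t X_t\big(\mathbb{P}^\infty(\cdot)-\mathbb{P}^m(\cdot)\big)\Big| \le (1-\rho)^{m-1}N_T \le \tfrac{\epsilon}{2} N_T.
\end{align*}
This holds for $t\ge m$ or when the full history equals $w$. For $|w|<m$ the condition $\tau_{t-|w|+1:t}=w$ at $t\ge|w|$ only matches windows, so I would restrict to the appropriate times and absorb the at most $m$ boundary terms into the deviation budget. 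The remaining term, $\sum_t X_t(\mathbf{1}\{o_{t+1}=o\}-\mathbb{P}^\infty(\cdot))$, is a martingale with bounded increments, so Azuma--Hoeffding bounds it by $\sqrt{2T\log(\cdot/\delta)}$.

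For the lower bound on $N_T$ I would use concentration for functions of the hidden Markov chain, as in~\cite{kontorovich2008concentration}. Under uniform random actions, $(s_t,o_t,a_t)$ is a hidden Markov model whose state chain has Dobrushin coefficient at most $1-S\alpha$ by Assumption~\ref{ass:uni-min}. The count $N_T$ is a function of the trajectory with Lipschitz constant at most $m+1$ with respect to Hamming distance, since each coordinate affects at most $m+1$ windows. The mixing-coefficient bound therefore gives
\begin{align*}
\Pr\big(|N_T-\mathbb{E}N_T|\ge u\big) \le 2\exp\!\Big(-\frac{u^2 S^2\alpha^2}{2T(m+1)^2}\Big),
\end{align*}
which accounts for the $(m+1)^2/(\alpha^2S^2)$ factor in~(\ref{eqn:main-thm-T}). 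For the expectation, each time $t\ge m$ matches $(w,a)$ with probability at least $\beta^{|w|}A^{-|w|}$, because every observation has probability at least $\beta$ regardless of state by Assumption~\ref{ass:uni-obs}, and each action is chosen with probability $1/A$. This gives $\mathbb{E}N_T\gtrsim T\beta^mA^{-m}$, so with $u=\mathbb{E}N_T/2$ we get $N_T\ge T\beta^m/(2A^m)$.

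Combining the two, $|\hat{\mathbb{P}}^m-\mathbb{P}^m|\le\epsilon/2+\sqrt{2T\log(\cdot)}/N_T$. The second term is at most $\epsilon/2$ once $T\gtrsim A^{2m}\log(\cdot)/(\beta^{2m}\epsilon^2)$, which is dominated by~(\ref{eqn:main-thm-T}). The $(2+\epsilon)^2$ factor there suggests that the authors instead concentrate the numerator and denominator jointly with Kontorovich's inequality at relative precision $\epsilon/(2+\epsilon)$, so I would choose whichever bookkeeping matches. A union bound over all $(w,a,w')$, of which there are at most order $A^{2m+1}O^{2m}$, together with the two events per triple, gives the $\log(48A^{2m+1}O^{2m}/\delta)$ term. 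For the rewards, $\Pr(o_t=\tilde o,a_t=a)\ge\beta/A$ at every $t$, so $N_T(w,a)\ge1$ on the same event already implies that every observation--action pair appears. Since $r$ is deterministic, this yields~(\ref{eqn:est-err-r}).

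The main obstacle is the lower bound on $N_T$. It requires a single-trajectory concentration inequality for a sliding-window count driven by a non-stationary hidden chain. The key step is verifying the Lipschitz constant and bounding the mixing coefficients $\eta_{ij}\le(1-S\alpha)^{j-i}$ for the joint process, which is where Assumption~\ref{ass:uni-min} enters.
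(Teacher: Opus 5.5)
Your argument is correct relative to the tools the paper provides, but it is organized differently from the paper's proof. The paper never separates bias from noise. For overestimation it forms the single statistic $\sum_t\varphi^t$ with $\varphi^t=\phi^t-(\mathbb{P}^m(w'\mid w,a)+\epsilon)\psi^t$. It shows $\mathbb{E}[\varphi^t]\le-\epsilon\beta^m/(2A^m)$ by writing $P(\phi^t=1)/P(\psi^t=1)$ as a convex combination of $\mathbb{P}^\infty(h'\mid h,a)$ over full histories ending in $w$, see~(\ref{eqn:phi-mixture}), and applying Lemma~\ref{lem:aux-transition} uniformly. It then applies Theorem~\ref{thm:conc-ineq} to that statistic with Lipschitz constant $(2+\epsilon)(m+1)$, which is indeed where the $(2+\epsilon)^2$ in~(\ref{eqn:main-thm-T}) comes from. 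A second application to the visit count rules out zero visits.

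You instead bound the bias pathwise by $(1-\rho)^{m-1}N_T$ and treat the innovation $\sum_t X_t(\mathbf{1}\{o_{t+1}=o\}-\mathbb{P}^\infty(\cdot))$ as a martingale. This is legitimate because uniform exploration is independent of the hidden state, so on $\{a_t=a\}$ the conditional probability $P(o_{t+1}=o\mid o_{1:t},a_{1:t})$ equals $\mathbb{P}^\infty(h_{1:t}\circ(a,o)\mid h_{1:t},a)$. You then use the hidden-Markov inequality only to lower-bound $N_T$. This spares you the computation of $\mathbb{E}[\varphi^t]$ and makes the noise bound mixing-free. With $N_T\ge T\beta^m/(4A^m)$, it gives a requirement on $T$ that is implied by~(\ref{eqn:main-thm-T}), since $\alpha S\le 1$ and $(2+\epsilon)^2(m+1)^2\ge 16$. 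It also confines Assumption~\ref{ass:uni-min} to the count lower bound, which a martingale argument over blocks of length $m$ could replace. Because the innovation's predictable quadratic variation is at most $N_T$, it also leaves room for variance-sensitive refinements. The paper's route, in exchange, uses one concentration tool throughout and reproduces the stated constant directly.

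Two bookkeeping points. First, the estimator~(\ref{eq:sample-p}) pools every time at which $w$ occurs as a suffix, so you cannot ``restrict to the appropriate times''. What your bias step actually uses is Lemma~\ref{lem:aux-transition} at every matched $t\ge m$, as the paper's Claim does. That leaves the fewer than $m$ times $|w|<t<m$ as boundary terms. You should add $t=T$ to these: there is no $o_{T+1}$, so that summand is not a martingale increment (the paper uses $\phi^T=0$ here).

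Second, for $|w|<m$ and $t\ge m$ this use of the lemma is not backed by filter stability. Filter stability only gives $(1-\rho)^{|w|-1}$ for a window with $|w|-1$ belief updates. For example, with $\mu$ a point mass and $|w|=1$, the pooled estimate converges to a stationary conditional frequency rather than to the $\mu$-based $\mathbb{P}^m(w'\mid w,a)$. Your proof and the paper's rely on this step in exactly the same way, so it does not distinguish the two arguments.
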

\begin{proof}
We first show~(\ref{eqn:est-err-p}); claim~(\ref{eqn:est-err-r}) is proven at the end.

At a high level, our proof proceeds as follows: For each tuple~$(w^{\prime},w,a) \in (\mathcal{H}^{\leq m})^2 \times \mathcal{A}$, the probability of misestimating $\mathbb{P}^m(w^{\prime} \mid w,a)$ can be written as an event in which a sum of indicator random variables describing the visitation of $(w^{\prime},w,a)$ deviates from its mean. Thus we aim to apply a concentration inequality to bound the probability of this event. Classical concentration bounds do not apply, as these indicator variables depend on the full action-observation history and are therefore generally not independent. However, we can show that the indicator variables are Lipschitz continuous functions of a hidden Markov chain whose underlying state process is contractive. Then, from~Theorem~1.2 of~\cite{kontorovich2008concentration} (restated as Theorem~\ref{thm:conc-ineq} in the appendix), we obtain a concentration bound for this setting. The model-based route is what enables this argument: applying Theorem~\ref{thm:conc-ineq} to each $(w^{\prime},w,a)$ tuple yields the $\mathcal{O}(\epsilon^{-2})$ rate. Model-free alternatives, such as finite-memory Q-learning~\cite{kara2023convergence}, instead track an iterate driven by history-dependent noise, for which the available finite-sample analyses yield the slower $\mathcal{O}(\epsilon^{-4})$ rate~\cite{cayci2024finite,anjarlekar2026scalable}.

In the following, $H \in \mathcal{H}^T$ denotes the random variable describing the sampled action-observation trajectory.
Let further $a \in \mathcal{A}$, and $w,w^{\prime} \in \mathcal{H}^{\leq m}$. 
Based on~$H$, we consider the following three undesirable events, corresponding to lack of visitation, and to over- and underestimation of transition probabilities, respectively:
\begin{align*}
E^1_{w,a} &\coloneqq \Big\{ \textstyle\sum_{t=1}^{T} \mathbf{1}\{ H_{t-|w|+1\,:\,t}=w \land a_t=a \} = 0 \Big\},\\
E^2_{w,w^{\prime},a} &\coloneqq \Big\{ \hat{\mathbb{P}}^m(w^{\prime} \mid w,a)-\mathbb{P}^m(w^{\prime} \mid w,a) \geq \epsilon \Big\},\\
E^3_{w,w^{\prime},a} &\coloneqq \Big\{ \mathbb{P}^m(w^{\prime} \mid w,a)-\hat{\mathbb{P}}^m(w^{\prime} \mid w,a) \geq \epsilon \Big\}.
\end{align*}
We aim to upper bound $P(E^1_{w,a} \cup E^2_{w,w^{\prime},a} \cup E^3_{w,w^{\prime},a})$ by bounding the probability of each event, followed by a union bound.

\underline{Upper bound on $P(E^2_{w,w^{\prime},a})$:} Without loss of generality, we assume $|w^{\prime}| = \min(m,|w| + 1)$, as otherwise $P(E^2_{w,w^{\prime},a})=0$. For each $t \in [T]$, we define indicator random variables for the occurrence of the transition from $w$ to $w^{\prime}$ under action $a$, and for the visitation of the pair $(w,a)$, respectively,
\begin{align*}
\phi^t_{w,w^{\prime},a}&\coloneqq \mathbf{1}\{ H_{t-|w^{\prime}|+2\,:\,t+1}=w^{\prime} \;\land\; H_{t-|w|+1\,:\,t}=w \;\land\; a_t=a \},\\
\psi^t_{w,a}&\coloneqq \mathbf{1}\{ H_{t-|w|+1\,:\,t}=w \land a_t=a \},
\end{align*}
as well as $\phi_{w,w^{\prime},a} \coloneqq \sum_{t=1}^T \phi^t_{w,w^{\prime},a}$ and $\psi_{w,a} \coloneqq \sum_{t=1}^T \psi^t_{w,a}$. Since $a \in \mathcal{A}$ and $w,w^{\prime} \in \mathcal{H}^{\leq m}$ have been fixed in the beginning, in the following, we omit the respective subscripts and use the simpler notations $\phi^t,\psi^t,\phi,\psi$. Next, let
\begin{align*}
\varphi^t \coloneqq \phi^t - \left( \mathbb{P}^m(w^{\prime} \mid w,a)+\epsilon \right) \psi^t.
\end{align*}
Then we have the following chain of implications,
\begin{align*}
&\mathbb{\hat{P}}^m(w^{\prime} \mid w,a)-\mathbb{P}^m(w^{\prime} \mid w,a) \geq \epsilon \\
&\qquad \Rightarrow\quad \frac{\phi}{\psi} \geq  \mathbb{P}^m(w^{\prime} \mid w,a) + \epsilon \\
&\qquad \Rightarrow\quad \sum_{t=1}^T \phi^t \geq \sum_{t=1}^T \left( \mathbb{P}^m(w^{\prime} \mid w,a) + \epsilon \right) \psi^t \\
&\qquad \Rightarrow\quad \sum_{t=1}^T \varphi^t \geq 0 \\
&\qquad \Rightarrow\quad \underbrace{\sum_{t=1}^T \left( \varphi^t - \mathbb{E}\left[ \varphi^t \right] \right) \geq -\sum_{t=1}^T \mathbb{E}\left[ \varphi^t \right]}_{(\star)}.
\end{align*}
The inequality $(\star)$ describes an event in which $\sum_{t=1}^T \varphi^t$ deviates from its mean. Here, this sum is a function of the hidden Markov chain given by the action-observation process $(o_t,a_t)_{t \geq 1}$, when choosing actions uniformly at random. Next, we show that even though the indicator variables $\varphi^t$ are not independent, the two conditions for applying the concentration bound in Theorem~\ref{thm:conc-ineq} hold.
\begin{enumerate}
\item \textbf{Contraction.} By Assumption~\ref{ass:uni-min}, transition probabilities for the Markov chain $(a_t,s_t)_{t \geq 1}$ are uniformly lower bounded by $\alpha / A$. Moreover, for two distributions $\nu,\nu^{\prime} \in \Delta(\mathcal{S} \times \mathcal{A})$, we can write
\begin{align*}
\left\lVert \nu-\nu^{\prime} \right\rVert_{TV} = 1-\sum_{(s,a) \in \mathcal{S} \times \mathcal{A}} \min(\nu(s,a),\nu^{\prime}(s,a)).
\end{align*}
Applying this identity and the above uniform lower bound to the definition of contraction in Theorem~\ref{thm:conc-ineq}, we obtain contraction with rate $\theta=1-\alpha S$.
\item \textbf{Lipschitz continuity.} Each change in the history affects at most $m+1$ of the indicator variables $\phi^t$, and likewise at most $m+1$ of the $\psi^t$. Since $\mathbb{P}^m(w^{\prime} \mid w,a)+\epsilon \leq 1+\epsilon$, the sum $\sum_{t=1}^T \varphi^t$ is therefore $L$-Lipschitz continuous as a function of $H$, in the sense defined in Theorem~\ref{thm:conc-ineq}, with $L=(2+\epsilon)(m+1)$.
\end{enumerate}
To derive a bound from Theorem~\ref{thm:conc-ineq}, we need a lower bound on the right-hand side of $(\star)$, which we provide next.

\medskip
\noindent\textbf{Claim.} For every $t$ with $|w| \leq t \leq T-1$, $\mathbb{E}\left[ \varphi^t \right] \leq -\frac{\epsilon\beta^m}{2A^m}$.
\begin{proof}[Proof of Claim]
Let $(a,o) \in \mathcal{A} \times \mathcal{O}$ be the action-observation pair with $w^{\prime} = w_{|w| - m + 2 \,:\, |w|} \circ (a,o)$, and for a full history $h \in \mathcal{H}^t$ write $h^{\prime} \coloneqq h \circ (a,o)$. Conditioning on the full history, the law of total probability gives
\begin{align}
\label{eqn:phi-mixture}
\frac{P(\phi^t=1)}{P(\psi^t=1)} = \sum_{h} P(H_{1:t}=h \mid \psi^t=1) \; \mathbb{P}^{\infty}(h^{\prime} \mid h,a),
\end{align}
where the sum ranges over all $h \in \mathcal{H}^t$ whose last $|w|$ steps agree with $w$: conditioned on such an $h$ and on $a_t=a$, we have $\phi^t=1$ precisely if the observation received at time $t+1$ equals $o$. The right-hand side of~(\ref{eqn:phi-mixture}) is a convex combination whose weights depend on $t$ and on the prior $\mu$. These weights need not be known, however, since Lemma~\ref{lem:aux-transition} bounds the difference $\vert \mathbb{P}^{\infty}(h^{\prime} \mid h,a) - \mathbb{P}^m(w^{\prime} \mid w,a) \vert$ by $(1-\rho)^{m-1}$ \emph{uniformly} over all full histories $h$ ending with $w$, so that the same bound applies to the convex combination. Using the definition of $\varphi^t$ and the fact that $\phi^t$ and $\psi^t$ are indicator random variables, we thus obtain
\begin{align*}
\mathbb{E}\left[ \varphi^t \right] &= \left( \frac{P(\phi^t=1)}{P(\psi^t=1)} - \mathbb{P}^m(w^{\prime} \mid w,a) - \epsilon \right) P(\psi^t=1) \\
&\overset{(a)}{\leq} \left( (1-\rho)^{m-1} - \epsilon \right) P(\psi^t=1) \;\overset{(b)}{\leq}\; -\frac{\epsilon\beta^m}{2A^m},
\end{align*}
where (a) is by Proposition~\ref{prop:uni-min-stab} and the above, and (b) uses $\epsilon \geq 2(1-\rho)^{m-1}$ as well as Assumption~\ref{ass:uni-obs}, which implies $P(\psi^t=1) \geq \frac{\beta^m}{A^m}$.
\end{proof}

For $t<|w|$ the window $w$ cannot have occurred, so that $\phi^t=\psi^t=0$ and hence $\mathbb{E}\left[ \varphi^t \right]=0$; for $t=T$ we have $\phi^T=0$, since the trajectory contains no observation at time $T+1$, and thus $\mathbb{E}\left[ \varphi^T \right] \leq 0$. Together with the claim, this gives
\begin{align*}
-\sum_{t=1}^{T} \mathbb{E}\left[ \varphi^t \right] \;\geq\; (T-m)\,\frac{\epsilon\beta^m}{2A^m} \;\geq\; \frac{T \epsilon\beta^m}{4A^m},
\end{align*}
where the last step uses $T \geq 2m$, which is implied by~(\ref{eqn:main-thm-T}). Theorem~\ref{thm:conc-ineq} now yields that
\begin{align*}
P(E^2_{w,w^{\prime},a}) \leq 2\exp \left( -\frac{\alpha^2 S^2 T \epsilon^2 \beta^{2m}}{32(2+\epsilon)^2(m+1)^2 A^{2m}} \right).
\end{align*}
With our choice of $T$, this implies $P(E^2_{w,w^{\prime},a}) \leq \frac{\delta}{24\,A^{2m+1}O^{2m}}$. By a symmetric argument, for the same choice of~$T$, we get $P(E^3_{w,w^{\prime},a}) \leq \frac{\delta}{24\,A^{2m+1}O^{2m}}$.

\underline{Upper bound on~$P(E^1_{w,a})$:} Similar to above, we aim to apply Theorem~\ref{thm:conc-ineq} for bounding
\begin{align}
\label{eqn:e1-bound}
P(E^1_{w,a}) \leq P \left( \psi - \mathbb{E}\left[ \psi \right] \leq -\mathbb{E} \left[ \psi \right] \right).
\end{align}
For this we observe that by linearity of expectation,
\begin{align*}
\mathbb{E} \left[ \psi \right] = \sum_{t=1}^T P(H_{t-|w|+1 \,:\, t}=w \land a_t=a),
\end{align*}
and each summand with $t \geq |w|$ is, as argued above, lower bounded by $\frac{\beta^m}{A^m}$, implying $\mathbb{E} \left[ \psi \right] \geq (T-m+1)\frac{\beta^m}{A^m} \geq \frac{T \beta^m}{2A^m}$.
Plugging this into (\ref{eqn:e1-bound}), and applying Theorem~\ref{thm:conc-ineq} with $L=m+1$, we get
\begin{align*}
P(E^1_{w,a}) \leq 2\exp \left( -\frac{\alpha^2 S^2 T \beta^{2m}}{8(m+1)^2 A^{2m}} \right).
\end{align*}
Hence our choice of $T$ suffices to have $P(E^1_{w,a}) \leq \frac{\delta}{24\,A^{2m+1}O^{2m}}$.

To conclude the proof, note that if none of $E^1_{w,a}$, $E^2_{w,w^{\prime},a}$, or $E^3_{w,w^{\prime},a}$ occurs for any $w,w^{\prime} \in \mathcal{H}^{\leq m}$ and $a \in \mathcal{A}$, then we obtain the desired bound. By a union bound, this happens with probability at least
\begin{align*}
&1 - \sum_{w,w^{\prime} \in \mathcal{H}^{\leq m}} \sum_{a \in \mathcal{A}} \left( P(E^1_{w,a}) + P(E^2_{w,w^{\prime},a}) + P(E^3_{w,w^{\prime},a}) \right) \\
\geq\;\, &1 - \left\vert \mathcal{H}^{\leq m} \right\vert^2 \cdot \left\vert \mathcal{A} \right\vert \cdot \frac{3\delta}{24\,A^{2m+1}O^{2m}} \\
=\;\, &1 - \frac{\delta}{2}
\end{align*}
where we have used $|\mathcal{H}^{\leq m}| \leq 2(AO)^m$. On the same event, in particular no $E^1_{(o),a}$ with $o \in \mathcal{O}$ occurs, that is, every observation-action pair is visited at least once. For any such visit at time $\hat{t}$ we have $r_{\hat{t}}=r(o_{\hat{t}},a)$, so that~(\ref{eq:sample-r}) returns $\hat{r}^m(w,a)=r(\tilde{o}_{|w|},a)=r^m(w,a)$, which proves~(\ref{eqn:est-err-r}). Thus both claims hold with probability at least $1-\delta$.
\end{proof}

\subsection{Guarantee for Planning under Approximation}

Having established the model estimation guarantee, we are now ready to prove our main result, Theorem~\ref{thm:main}.
\begin{proof}[Proof of Theorem~\ref{thm:main}]
First, we introduce the value function of the superstate MDP $\mathcal{M}^m$. For a policy $\pi^m \in \Pi^m$, let
\begin{align*}
V^m(\pi^m) \coloneqq \mathbb{E}_{\pi^m,s_1 \sim \mu} \left[ \sum_{t=1}^{\infty} \gamma^{t-1} r^m(w_t,a_t) \right]
\end{align*}
where $w_t \in \mathcal{H}^{\leq m}$ is the superstate at time $t$ and the expectation is taken over trajectories sampled according to $\mathbb{P}^m$. We also let $V^m_{\star} \coloneqq \max_{\pi^m \in \Pi^m} V^m(\pi^m)$. Moreover, we define the estimated superstate MDP $\hat{\mathcal{M}}^m$ with value function
\begin{align*}
\hat{V}^m(\pi^m) \coloneqq \mathbb{E}_{\pi^m,s_1 \sim \mu} \left[ \sum_{t=1}^{\infty} \gamma^{t-1} \hat{r}^m(w_t,a_t) \right]
\end{align*}
where the expectation is taken over trajectories sampled according to $\hat{\mathbb{P}}^m$. Let $\hat{V}^m_{\star} \coloneqq \max_{\pi^m \in \Pi^m} \hat{V}^m(\pi^m)$.

We now decompose our error into
\begin{align*}
V^{\star} - V(\pi^m) &\leq \underbrace{\left\vert V^{\star} - V^m_{\star} \right\vert}_{(a)} + \underbrace{\left\vert V^m_{\star} - \hat{V}^m_{\star} \right\vert}_{(b)} + \underbrace{\left\vert \hat{V}^m_{\star} - V^m(\pi^m) \right\vert}_{(c)}\\
&\qquad\qquad\qquad\qquad\qquad\;\; + \underbrace{\left\vert V^m(\pi^m)-V(\pi^m) \right\vert}_{(d)}.
\end{align*}
Here, (a) and (d) are errors due to finite-window approximation which under filter stability are shown to be bounded in~\cite{anjarlekar2026scalable} by $2(1-\rho)^{m-1} / (1-\gamma)^2$.
Moreover, (b) is the error due to model estimation, and~(c) is the value iteration error. We will now proceed with bounding~(b) and~(c).

In the following, we condition on the event of Proposition~\ref{prop:est-prob}, on which the transition estimates satisfy $\vert \mathbb{P}^m-\hat{\mathbb{P}}^m \vert \leq \epsilon$ while the reward estimates are exact.
Then for (b), by the so-called simulation lemma (see~\cite{kearns2002near}, Lemma~4), we have
\begin{equation}
\label{eqn:main-b}
\begin{aligned}
(b) &\leq \sup_{\pi^m \in \Pi^m} \left\vert V^m(\pi^m)-\hat{V}^m(\pi^m) \right\vert \leq \frac{\gamma \epsilon}{(1-\gamma)^2}.
\end{aligned}
\end{equation}
To bound (c), we further decompose the error into
\begin{align*}
\left\vert \hat{V}^m_{\star} - V^m(\pi^m) \right\vert \leq \underbrace{\left\vert \hat{V}^m_{\star} - \hat V^m(\pi^m) \right\vert}_{(c_1)} + \underbrace{\left\vert \hat V^m(\pi^m) - V^m(\pi^m) \right\vert}_{(c_2)}.
\end{align*}
Then (c$_1$) is the standard value iteration error \cite{bertsekas2012dynamic} which is known to be bounded by $\frac{2\gamma^K}{1-\gamma}$. By an argument analogous to~(\ref{eqn:main-b}), we have $(c_2) \leq \frac{\gamma \epsilon}{(1-\gamma)^2}$.

Combining the above bounds, we obtain
\begin{align*}
V^{\star} - V(\pi^m)
\leq \frac{2\gamma^K}{1-\gamma} + \frac{2\gamma\epsilon + 4(1-\rho)^{m-1}}{(1-\gamma)^2},
\end{align*}
from which the desired bound follows with our choice of~$K$ and $\gamma \leq 1$.
\end{proof}

\section{Simulation}

We illustrate the convergence of Algorithm~\ref{alg:main} on a POMDP environment, \emph{Probe}. The environment is deliberately simple, allowing us to verify our assumptions, yet sufficiently rich so that increasing the window size $m$ leads to improved optimal policies.

The environment is defined as follows. A hidden state $s \in \mathcal{S}=\{s_1,s_2\}$ is initialized uniformly at random. The agent selects actions from $\mathcal{A}=\{p,a_1,a_2\}$, and observations take values in $\mathcal{O}=\{o_1,o_2\}$. The action $p$ (probing) reveals the true state with probability $0.95$, yielding observation $o_i$ when the underlying state is $s_i$, and produces a uniformly random observation otherwise. The actions $a_1$ and $a_2$ yield an uninformative observation, returning a uniformly random observation with probability $0.95$, and the observation corresponding to the true state otherwise. The reward function is defined as $r(o_1,a_1)=r(o_2,a_2)=1$ and $r(o_1,a_2)=r(o_2,a_1)=-1$, while probing yields no reward, $r(o_1,p)=r(o_2,p)=0$. State transitions do not depend on the chosen actions: the state remains unchanged with probability $0.95$, and switches otherwise.

Note that \emph{Probe} satisfies Assumption~\ref{ass:uni-min} with $\alpha=0.05$. Since an action influences the observation received at the next time step, \emph{Probe} has an observation kernel of the more general form $\mathbb{O}(\cdot \mid s_{t+1},a_t)$, which satisfies the condition of Assumption~\ref{ass:uni-obs} with $\beta=0.025$.\footnote{All our results extend to this more general class of observation kernels.} Moreover, for small window sizes $m$, the optimal policy is simple: probe once, then choose $a_1$ or $a_2$ according to the observed outcome for $m$ steps, before probing again. 

\begin{figure}[t]
\centering
\includegraphics[width=\linewidth]{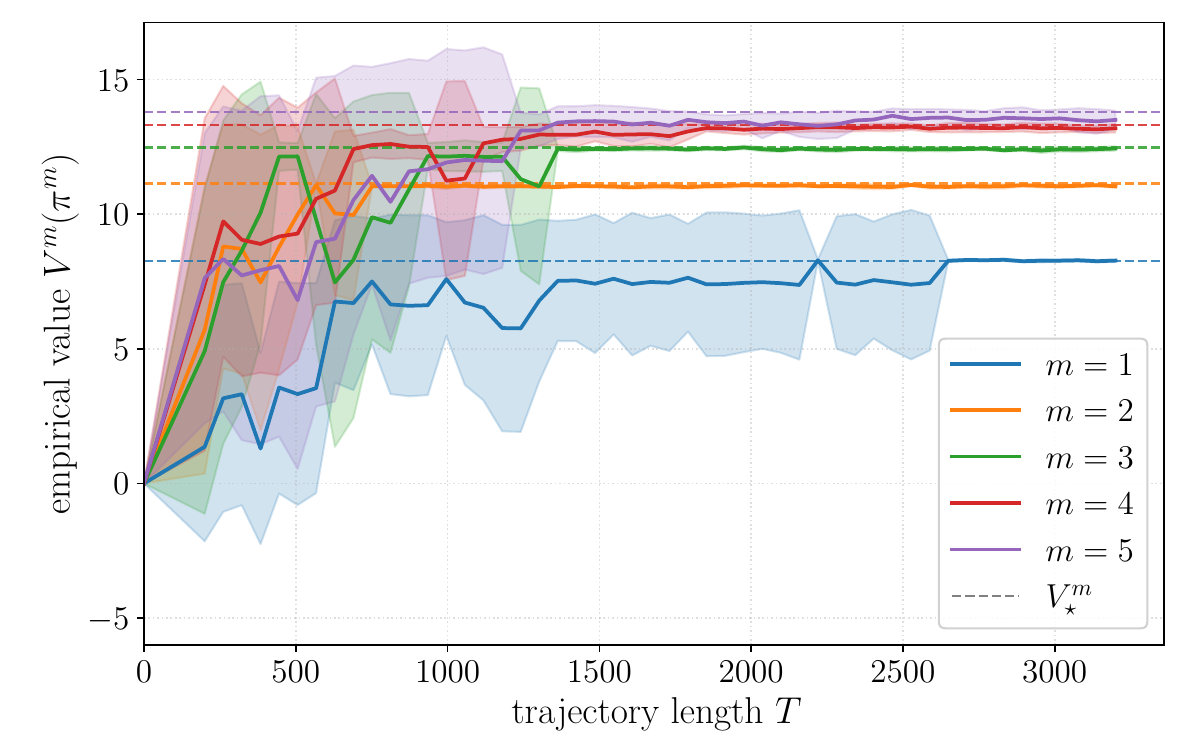}
\caption{Empirical convergence of $V^m(\pi^m)$ towards $V^m_\star$ for $m \in \left\{ 1,\dots,5 \right\}$ when running Algorithm~\ref{alg:main} in the \emph{Probe} environment.}
\label{fig:plot}
\end{figure}

Figure~\ref{fig:plot} shows the optimal values $V^m_\star$ of the superstate MDPs, which increase with $m$. We plot the empirical value $V^m(\pi^m)$ that is obtained by running Algorithm~\ref{alg:main} with a fixed number of value iterations $K=50$ and various trajectory lengths $T$ for the model estimation subroutine. The plot shows the average and standard deviation across 10 runs for each $m \in \left\{ 1,\dots,5 \right\}$. We observe that once $T$ is sufficiently large to estimate the superstate model well, the value achieved by the obtained policies converges to the respective~$V^m_\star$.

\section{Conclusion}

We studied model-based learning of finite-window policies in tabular POMDPs via a superstate MDP reduction. Our main result shows that under our Assumptions~\ref{ass:uni-min} and~\ref{ass:uni-obs}, the superstate MDP can be estimated from a single POMDP trajectory with tight sample complexity guarantees. Combined with value iteration, this yields a principled algorithm for learning near-optimal finite-window policies.

An important direction for future work is to investigate whether our assumptions can be relaxed. Moreover, it would be valuable to explore extensions of our analysis and empirical evaluation to more general POMDP environments, such as large or continuous state and action spaces through function approximation.

\section*{ACKNOWLEDGMENT}

This work was supported by the Swiss National Science Foundation under grant~200020\_207984.

\section*{APPENDIX}

We prove that Assumptions~\ref{ass:uni-min} and~\ref{ass:uni-obs} imply filter stability.
\begin{proof}[Proof of Proposition~\ref{prop:uni-min-stab}]
When taking action $a \in \mathcal{A}$ and observing $o \in \mathcal{O}$, for any $s^{\prime} \in \mathcal{S}$, the updated belief is
\begin{align*}
b(s^{\prime} \mid h \circ (a,o)) = \frac{\sum_{s \in \mathcal{S}} b(s \mid h)\; \mathbb{P}(s^{\prime} \mid s,a)\; \mathbb{O}(o \mid s^{\prime})}{\sum_{s^{\prime\prime} \in \mathcal{S}} \sum_{s \in \mathcal{S}} b(s \mid h)\; \mathbb{P}(s^{\prime\prime} \mid s,a)\; \mathbb{O}(o \mid s^{\prime\prime})}.
\end{align*}
We can lower bound the numerator by
\begin{align*}
\sum_{s \in \mathcal{S}} b(s \mid h)\; \mathbb{P}(s^{\prime} \mid s,a) \; \mathbb{O}(o \mid s^{\prime}) &\geq \alpha \beta \sum_{s \in \mathcal{S}} b(s \mid h) \geq \alpha \beta
\end{align*}
where we have used Assumptions~\ref{ass:uni-min} and~\ref{ass:uni-obs}. Since the denominator is upper bounded by $1$, we obtain for the uniform distribution $\nu \in \Delta(\mathcal{S})$ with $\nu(s)=1 / S$ for each $s \in \mathcal{S}$, that
\begin{align*}
b(s \mid h \circ (a,o)) \geq S \alpha \beta \, \nu(s)
\end{align*}
for any $s \in \mathcal{S}$. This is a minorization (Doeblin) condition: every one-step belief update is bounded below by the fixed measure $S \alpha \beta \, \nu$, uniformly over the history $h$, the action $a$, and the observation $o$. Then Lemma~5.2 of~\cite{van2008hidden} applies and yields the contraction of Definition~\ref{def:filter-stab} with $\rho=S \alpha \beta$.
\end{proof}

For completeness, we restate the following concentration inequality for functions over hidden Markov chains, which is a key tool in our proof of Theorem~\ref{thm:main}. For reference, see~\cite{kontorovich2008concentration}, Theorem~1.2 and Theorem~7.1.
\begin{theorem}
\label{thm:conc-ineq}
Let $(X_1,\dots,X_n)$ and $(Y_1,\dots,Y_n)$ be sequences of random variables with each $X_i \in \mathcal{X}$ and $Y_i \in \mathcal{Y}$ where $\mathcal{X}$ and $\mathcal{Y}$ are countable sets. Suppose $(Y_1,\dots,Y_n)$ forms a hidden Markov chain with underlying chain $(X_1,\dots,X_n)$.
Further, let $\varphi:\mathcal{Y}^n \to \mathbb{R}$ and denote $\mathbb{E}\varphi \coloneqq \mathbb{E}_{(Y_1,\dots,Y_n)}[\varphi(Y_1,\dots,Y_n)]$. Suppose the following two conditions hold:
\begin{enumerate}
\item\label{item:contr} \textbf{Contraction of underlying Markov chain.} There exists $0 < \theta < 1$ such that for all $i \in [n-1]$,
\begin{align*}
\hspace{-2em}\max_{x_i^{\prime},x_i^{\prime\prime} \in \mathcal{X}} \left\lVert P(X_{i+1} \mid X_i=x_i^{\prime})-P(X_{i+1} \mid X_i=x_i^{\prime\prime}) \right\rVert_{TV} \leq \theta.
\end{align*}
\item\label{item:lip} \textbf{Lipschitz continuity.} There exists $L>0$ such that for all $y,y^{\prime} \in \mathcal{Y}^n$,
\begin{align*}
\left\vert \varphi(y)-\varphi(y^{\prime}) \right\vert \leq L \cdot d_H(y,y^{\prime}),
\end{align*}
where $d_H(y,y^{\prime}) \coloneqq \sum_{i \in [n]} \mathbf{1}\{ y_i \not= y_i^{\prime} \}$.
\end{enumerate}
Then for any $t > 0$,
\begin{align*}
P_{(Y_1,\dots,Y_n)}\Big( \left\vert \varphi(Y_1,\dots,Y_n) - \mathbb{E}\varphi \right\vert \geq t \Big)
\leq 2 \exp \left( - \frac{(1-\theta)^2 t^2}{2nL^2} \right).
\end{align*}
\end{theorem}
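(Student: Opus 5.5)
The statement is Theorem~1.2 of~\cite{kontorovich2008concentration}, specialized to hidden Markov chains. I would reprove it with the martingale-difference method of Kontorovich and Ramanan. The plan is to write $\varphi-\mathbb{E}\varphi$ as a Doob martingale sum, bound each increment uniformly by $L/(1-\theta)$ using a coupling argument built on the contraction condition, and finish with Azuma--Hoeffding.

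\textbf{Step 1: martingale decomposition.} Let $\mathcal{F}_i=\sigma(Y_1,\dots,Y_i)$ and define the increments $V_i \coloneqq \mathbb{E}[\varphi\mid\mathcal{F}_i]-\mathbb{E}[\varphi\mid\mathcal{F}_{i-1}]$. Then $\varphi-\mathbb{E}\varphi=\sum_{i=1}^n V_i$. It suffices to show that, for every fixed prefix $y_{1:i-1}$ and any two values $y_i,y_i'$,
\begin{align*}
\left\vert \mathbb{E}[\varphi \mid Y_{1:i}=(y_{1:i-1},y_i)]-\mathbb{E}[\varphi \mid Y_{1:i}=(y_{1:i-1},y_i')] \right\vert \leq \frac{L}{1-\theta}.
\end{align*}
Since $V_i$ is centered given $\mathcal{F}_{i-1}$, this difference bound implies $|V_i|\le L/(1-\theta)$. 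Azuma--Hoeffding then gives
\begin{align*}
P\big(|\varphi-\mathbb{E}\varphi|\ge t\big)\le 2\exp\!\left(-\frac{t^2(1-\theta)^2}{2nL^2}\right),
\end{align*}
which is exactly the claimed bound.

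\textbf{Step 2: structure of the conditional future.} Conditioning on $Y_{1:i}=y_{1:i}$ only affects the law of $X_i$, which becomes a filtering distribution $\lambda$ (or $\lambda'$ for the primed prefix). The future $(X_{i+1},\dots,X_n)$ is then a Markov chain started from $X_i\sim\lambda$ with the original transition kernel. Each $Y_j$ for $j>i$ is drawn from the emission law given $X_j$, independently of everything else. This holds because future observations are not conditioned on, and $Y_j$ depends only on $X_j$.

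\textbf{Step 3: coupling.} I construct two processes $(X_j,Y_j)_{j\ge i}$ and $(X'_j,Y'_j)_{j\ge i}$, started from $\lambda$ and $\lambda'$, on a common probability space.
\begin{itemize}
\item At each step $j\to j+1$, draw $X_{j+1},X'_{j+1}$ from a maximal coupling of $P(\cdot\mid X_j)$ and $P(\cdot\mid X'_j)$.
\item Once the two chains meet, keep them together.
\item Whenever $X_j=X'_j$, emit $Y_j=Y'_j$ from the same emission draw.
\end{itemize}
By the contraction condition, $P(X_{j+1}\ne X'_{j+1}\mid X_j\ne X'_j)\le\theta$. Inductively, $P(Y_j\ne Y'_j)\le P(X_j\ne X'_j)\le\theta^{j-i}$ for $j>i$.

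\textbf{Step 4: bounding the difference.} Apply the Lipschitz condition with the Hamming metric, where position $i$ always counts as one difference and the prefix $y_{1:i-1}$ is shared. This gives
\begin{align*}
\left\vert \mathbb{E}\varphi(y_{1:i},Y_{i+1:n})-\mathbb{E}\varphi(y_{1:i-1},y_i',Y'_{i+1:n})\right\vert \le L\Big(1+\sum_{j>i}\theta^{j-i}\Big)\le\frac{L}{1-\theta}.
\end{align*}

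\textbf{Main obstacle.} The delicate point is Step~2, namely verifying that conditioning on observations up to time $i$ leaves the future a Markov chain with unchanged kernel and emissions. Only the initial law changes, which is what allows the coupling and the contraction condition to apply. The remaining steps are routine. One should also check that the Azuma constant matches: with $|V_i|\le c$, the bound is $\exp(-t^2/(2nc^2))$.
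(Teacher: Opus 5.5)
Your proof is correct. The paper does not prove this statement; it imports it from \cite{kontorovich2008concentration}.

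The cited route uses the same martingale/Azuma skeleton as yours, but controls the increments in two stages. First comes a general bound on the martingale differences in terms of the $\eta$-mixing coefficients $\bar\eta_{ij}$, which are worst-case total-variation distances between the conditional laws of the tail $Y_{j:n}$ under two prefixes differing only at position $i$; this holds for an arbitrary process on a countable space. Second comes a separate result showing $\bar\eta_{ij}\le\theta^{j-i}$ for hidden Markov chains.

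You merge these two stages by using the Markov structure directly. Your Step~2 observation (conditioning on $Y_{1:i}$ only changes the law of $X_i$) plus a step-by-step maximal coupling of the hidden chains gives a single coupling of the two conditional futures. In it, $P(Y_j\neq Y_j')\le\theta^{j-i}$ holds simultaneously for all $j>i$, so the Lipschitz condition bounds the increment by $L/(1-\theta)$ at once.

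Your route is shorter and self-contained. The cited route is more general: it applies to any process with summable $\eta$-mixing coefficients, where there is no underlying Markov chain to couple.

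A side remark: your difference bound says that, conditionally on $\mathcal{F}_{i-1}$, $V_i$ lies in an interval of length $L/(1-\theta)$. If you apply Hoeffding's lemma to this conditional range, rather than to $|V_i|\le L/(1-\theta)$, the exponent improves to $2(1-\theta)^2t^2/(nL^2)$.
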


\bibliographystyle{IEEEtran}
\balance
\bibliography{refs}

\end{document}